\documentclass{article}
\usepackage{iclr2027_conference,times}

\usepackage{amsmath,amsfonts,bm}

\def\eqref#1{equation~\ref{#1}}

\def\1{\bm{1}}

\DeclareMathAlphabet{\mathsfit}{\encodingdefault}{\sfdefault}{m}{sl}
\SetMathAlphabet{\mathsfit}{bold}{\encodingdefault}{\sfdefault}{bx}{n}

\usepackage{hyperref}
\usepackage{url}
\usepackage{booktabs}
\usepackage{graphicx}
\usepackage{amsmath}
\usepackage{amssymb}
\usepackage[table]{xcolor}
\usepackage{array}
\usepackage{multirow}
\usepackage{subcaption}
\usepackage{enumitem}
\usepackage{wrapfig}
\usepackage{algorithm}
\usepackage{algpseudocode}
\usepackage{pifont}
\usepackage{amsthm}
\newtheorem{proposition}{Proposition}

\definecolor{hred}{RGB}{168,26,38}
\definecolor{hblue}{RGB}{26,62,148}
\newcommand{\hi}[1]{\textbf{\textcolor{hred}{#1}}}

\newcommand{\cthead}[1]{\begin{tabular}[c]{@{}c@{}}#1\end{tabular}}

\newcommand{\sd}[1]{\underline{\textcolor{hblue}{#1}}}

\newcommand{\method}{\textsc{OneShot}}
\newcommand{\kshot}{$k$-\textsc{Shot}}

\title{\method{}: Label-Efficient Detector Weighting for
Time Series Anomaly Detection}

\author{Seunghan Lee, Jun Seo, Jaehoon Lee, Junhyeok Kang, Sangjun Han, Sungdong Yoo, \\
\bf Minjae Kim, Tae Yoon Lim, Dongwan Kang, Hwanil Choi, Soonyoung Lee, Wonbin Ahn \\
LG AI Research}

\iclrfinalcopy

\newcommand{\nseries}{152}
\newcommand{\ndropped}{7}

\newcommand{\ntemps}{8}
\newcommand{\taulist}{0.01, 0.02, 0.035, 0.05, 0.075, 0.1, 0.15 and 0.3}

\newcommand{\taupick}{0.02}

\IfFileExists{tables/macros_diag.tex}{
\newcommand{\neffmed}{6.1}
\newcommand{\ndetmed}{14}

}{}
\IfFileExists{tables/macros_pop.tex}{\newcommand{\ntotal}{350}
\newcommand{\neligible}{159}
\newcommand{\nsingle}{154}
\newcommand{\ndegenerate}{15}
\newcommand{\nshort}{22}
}{}
\IfFileExists{tables/macros_pool.tex}{
}{}
\IfFileExists{tables/macros_first.tex}{
}{}
\IfFileExists{tables/macros_noisebig.tex}{

}{}
\IfFileExists{tables/macros_mvdiag.tex}{\newcommand{\nmvdiag}{73}
\newcommand{\mvneff}{2.6}

\newcommand{\mvndet}{9}
}{}
\IfFileExists{tables/macros_det.tex}{

}{}
\IfFileExists{tables/macros_review.tex}{\newcommand{\ndrop}{521}
\newcommand{\ntotfile}{2584}
\newcommand{\droppct}{20.2}
\newcommand{\medkept}{14}
\newcommand{\medraw}{17}

\newcommand{\ndropbeat}{2}

\newcommand{\causargmax}{-0.016}
\newcommand{\causpargmax}{0.04}
\newcommand{\caussoft}{-0.006}
\newcommand{\causpsoft}{0.82}

\newcommand{\sdsoft}{0.039}
\newcommand{\sdargmax}{0.058}
}{}

\IfFileExists{tables/macros_cost.tex}{
\newcommand{\ncost}{24}
\newcommand{\costms}{45}
\newcommand{\costpool}{44.9}

}{}
\providecommand{\costms}{47}
\providecommand{\costpool}{41}

\IfFileExists{tables/macros_nshot.tex}{

}{}
\IfFileExists{tables/macros_cleanneg.tex}{
\newcommand{\negcontam}{6.0\%}
\newcommand{\negdrho}{-0.7}
}{}
\IfFileExists{tables/macros_metrics.tex}{
\newcommand{\nmvmetricseries}{73}
}{}
\IfFileExists{tables/macros_win.tex}{\newcommand{\winlow}{51}

\newcommand{\nwinseries}{152}
}{}
\IfFileExists{tables/macros_bestk.tex}{\newcommand{\bestkmode}{2}
\newcommand{\bestkpct}{36.2}

}{}
\IfFileExists{tables/macros_negsize.tex}{

\newcommand{\negnseries}{150}

}{}
\IfFileExists{tables/macros_neffk.tex}{

}{}
\IfFileExists{tables/macros_mvfam.tex}{\newcommand{\mvfamnseries}{55}

}{}
\IfFileExists{tables/macros_mvn.tex}{
}{}
\IfFileExists{tables/macros_pools.tex}{\newcommand{\nwinlen}{9}
\newcommand{\winlist}{15, 25, 35, 50, 75, 150, 200, 300 and 600}
\newcommand{\nwindet}{4}
\newcommand{\windets}{HSF, HSF\_Causal, HSF\_U, PaAno\_PAI}

}{}
\IfFileExists{tables/macros_pool2.tex}{
}{}
\IfFileExists{tables/macros_data.tex}{\newcommand{\nsrcu}{19}
\newcommand{\nsrcmv}{14}
\newcommand{\nsrcall}{23}
}{}
\IfFileExists{tables/macros_noiseret.tex}{

}{}

\begin{document}

\maketitle
\lhead{Preprint.}

\vspace{-2pt}
\begin{abstract}
\vspace{-2pt}
Time series (TS) anomaly detection (AD) lacks a \textit{universal detector}, with the
best choice determined by the series rather than by the algorithm. Existing methods
therefore either select one detector for each series or average a pool of them.
However, both assume that no label ever arrives, while a deployed monitor does receive
them, since an operator confirms the alarms it raises. In this paper, we propose
\method{}, which uses a single confirmed anomaly to weight the pool rather than to
choose one detector from it. Specifically, we score each detector on the confirmed
anomaly, and those scores become the weights of a soft mixture over the whole pool. We
show that \textit{a single confirmed anomaly is enough} to come close to the best
detector for the series. Across \nseries{} univariate and \nmvmetricseries{} multivariate
series, \method{} outperforms the strongest single detector, averaging the whole pool,
and even picking the best detector on the confirmed event. Code is available at:
\url{https://github.com/seunghan96/one_shot}.
\end{abstract}

\vspace{-2pt}
\section{Introduction}
\label{sec:intro}
\vspace{-2pt}

\IfFileExists{figs/fig_wins.pdf}{%
\begin{wrapfigure}{r}{0.34\textwidth}
\vspace{-8pt}
\centering
\includegraphics[width=0.30\textwidth]{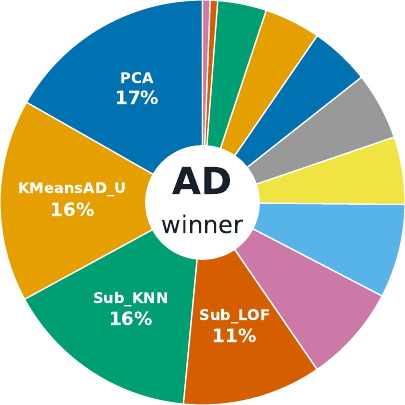}
\caption{\textbf{No universal detector.} How often each detector wins.}
\label{fig:wins}
\vspace{-16pt}
\end{wrapfigure}}{}
Anomaly detection (AD) in time series (TS) supports monitoring in industrial control
\citep{mathur2016swat}, health care \citep{moody2001impact}, web services
\citep{ren2019time}, and finance \citep{lee2026exaone}. Recent benchmarks evaluate dozens of detectors and
report that \textit{no single detector wins across datasets}
\citep{liu2024elephant,paparrizos2022tsb}. As shown in Figure~\ref{fig:wins}, the
wins are spread over the pool rather than held by one detector. A practitioner who
deploys a monitor therefore faces a choice among various candidates, and that
choice matters more than the design of any one candidate.

An operator who answers a running monitor's alarms confirms events as they arrive, and
each confirmation is a label on the series itself. However, existing solutions
assume that no labels ever arrive \citep{zhao2021automating,zhao2022toward}. That
leaves them ranking detectors by the scores or by results on other datasets
\citep{marques2015internal}. A monitor starts from one confirmed event,
and we ask how close that brings a deployment to the best detector.

\IfFileExists{figs/fig2_final.pdf}{%
\begin{wrapfigure}{r}{0.46\textwidth}
\vspace{-14pt}
\centering
\includegraphics[width=0.44\textwidth]{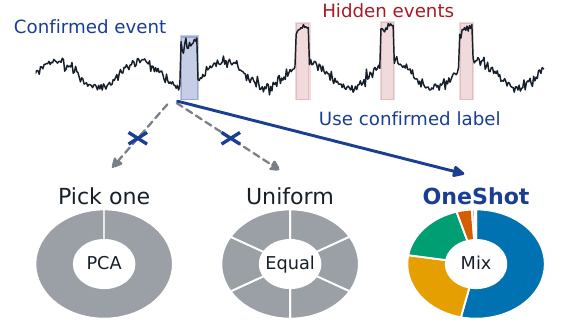}
\vspace{-4pt}
\caption{\textbf{Prior works vs.\ \method{}.} Only \method{} weights every detector
by how well it did on the confirmed event.}
\label{fig:online}
\vspace{-10pt}
\end{wrapfigure}}{}
In this paper, we show that confirming a \textit{single anomaly event is enough} to close most
of the gap to the best detector for the series. As shown in Figure~\ref{fig:online}, the method
spends that one event on a weight over the whole pool, while the alternatives ignore it. We
measure this gap on \nseries{} univariate series and 18 detectors from a curated benchmark
\citep{liu2024elephant}, under the measure it reports as most reliable
\citep{paparrizos2022volume}. This holds for an event drawn at random, i.e., an operator
need not pick a good one. Note that we never retrain a detector, i.e., the label changes how the detectors are weighted rather than the detectors themselves.

Motivated by this observation, we propose \method{}, which replaces the hard choice of
one detector with a soft mixture whose weights come from the confirmed event.
Specifically, we score each detector on that window and turn those scores
into a softmax weight. The combined score then follows the detectors that did well on
the window, while the rest keep a smaller weight. We set the temperature that decides
how sharply the weights follow the window on held-out series, and the accuracy stays
flat over a wide band of values. The main contributions are summarized as follows:
\begin{itemize}[leftmargin=1.2em, itemsep=1pt, topsep=2pt, parsep=0pt]
  \item We show that a single confirmed anomaly event closes most of the gap
        between a detector chosen without labels and the best detector for the series,
        whether the event is the first one a monitor confirms or any other.
  \item We propose \method{}, a deliberately simple softmax weighting over a detector pool
        without retraining, which isolates what one label is worth. It beats the window
        winner and unsupervised ensembles, and matches the best fixed top-$K$ average,
        whose $K$ is unknown in advance.
  \item We propose a \kshot{} extension that reads every event confirmed so far
        through the same softmax, which lets a deployment spend the labels it
        accumulates without retuning anything.
  \item We keep the cost of the method at one pass over that window and a
        weighted sum of the scores the pool already produces, which adds no detector
        run to a deployed pipeline.
  \item We conduct extensive experiments on \nseries{} univariate series and
        \nmvdiag{} multivariate series, with three detector pools of up to 77
        detectors, demonstrating the effectiveness of \method{}.
\end{itemize}

\section{Related Work}
\label{sec:related}

\providecommand{\yes}{\textcolor{hred}{\ding{51}}}
\providecommand{\no}{\textcolor{black!35}{\ding{55}}}
\begin{table}[t]
\centering
\caption{\textbf{Comparison of AD detector families.}
\textit{Frozen}
means no training loop, \textit{No corpus} no pool of past evaluations, and
\textit{Label} ground truth from the monitored series. \textit{Mixture} means
every detector keeps a share of the weight, and \textit{Threshold} that the row also
says where to raise the alarm.}
\label{tab:families}
\footnotesize
\setlength{\tabcolsep}{4.5pt}
\begin{tabular}{@{}l|ccc|cc@{}}
\toprule
\multirow{2.5}{*}{Family} & \multicolumn{3}{c|}{\textit{What it needs}} & \multicolumn{2}{c@{}}{\textit{What it returns}} \\
\cmidrule(lr){2-4}\cmidrule(l){5-6}
 & Frozen & No corpus & Label & Mixture & Threshold \\
\midrule
\multicolumn{6}{@{}l}{\textit{Single detector (w/o labels)}} \\
\quad One fixed detector \citep{liu2024elephant}                 & \yes & \yes & \no  & \no  & \no \\
\quad Internal measures \citep{ma2023need}                       & \yes & \yes & \no  & \no  & \no \\
\quad Local selection \citep{zhao2019lscp}                       & \yes & \yes & \no  & \no  & \no \\
\quad Meta-learned selection \citep{zhao2021automating}          & \yes & \no  & \no  & \no  & \no \\
\quad Classifier-based selection \citep{sylligardos2023choose}   & \yes & \no  & \no  & \no  & \no \\
\midrule
\multicolumn{6}{@{}l}{\textit{Detector ensemble (w/o labels)}} \\
\quad Score averaging \citep{aggarwal2015theoretical}            & \yes & \yes & \no  & \yes & \no \\
\quad Feature bagging \citep{lazarevic2005feature}               & \no  & \yes & \no  & \yes & \no \\
\quad Selective ensembles \citep{rayana2016less}                 & \yes & \yes & \no  & \no  & \no \\
\midrule
\multicolumn{6}{@{}l}{\textit{Trained detector (w/ labels)}} \\
\quad Semi-supervised detectors \citep{ruff2020deep}             & \no  & \yes & \yes & \no  & \no \\
\quad Active discovery \citep{das2016incorporating}              & \no  & \yes & \yes & \no  & \no \\
\midrule
\rowcolor{yellow!18} \method{} (Ours)                            & \yes & \yes & \yes & \yes & \yes \\
\bottomrule
\end{tabular}
\end{table}

\noindent\textbf{AD benchmarks.}
Curated benchmarks run every detector on many series and report that the ranking
changes from series to series \citep{paparrizos2022tsb,liu2024elephant,zhou2026mtsbench}. Widely used
datasets fall to trivial baselines \citep{wu2023current}, and a random score beats a
real detector when one flagged point inside an anomaly counts as detecting all of it
\citep{kim2022towards}. We therefore adopt a curated benchmark
\citep{liu2024elephant} and change neither its series nor its measure
\citep{paparrizos2022volume}, which scores over the whole anomaly rather than one
point.

\noindent\textbf{Detector selection for AD.}
One line needs a corpus of past evaluations, either to find datasets a new
series looks like \citep{zhao2021automating,zhao2022toward} or to train a classifier
that predicts the winner \citep{sylligardos2023choose,boniol2024adecimo,abdelmaksoud2026ramses}. Another needs no corpus and ranks detectors by an internal
measure read from the scores alone, which never checks a confirmed
anomaly \citep{marques2015internal,goix2016how}. Neither is reliable without labels, since
internal measures alone are not significantly different from random
\citep{ma2023need} and over half of the automated pipelines fail to beat a random
choice \citep{liu2025tsbautoad}. We instead read each detector's accuracy on a
confirmed anomaly, and Table~\ref{tab:families} places
\method{} against them.

\noindent\textbf{Combining detectors.}
Outlier ensembles average normalized scores or ranks instead of picking a single
detector \citep{aggarwal2015theoretical,zimek2014ensembles}, with diversity from feature
subsets picked by hand \citep{lazarevic2005feature}. An average gives every detector the same small
share, whether it fails or leads. A second line therefore prunes detectors first
\citep{rayana2016less,zhao2019lscp}. However, there is no way to tell a right detector
from a confidently wrong one without labels, indicating that both can only ask
which detectors agree with each other.
We instead ask how accurate each detector is on the
confirmed event, and we use that answer to weight the pool rather than prune it, with no
parameter to fit on a window as short as one event.

\noindent\textbf{Semi-supervised AD.}
Semi-supervised detectors train on normal data with a few labeled anomalies
\citep{ruff2020deep,pang2019deep,pang2021deep,lai2024openset}. Active discovery asks an analyst to label the
points a model is unsure about \citep{das2016incorporating}. Both spend the labels inside a
detector, which needs training data, a training loop, and a detector built to take them.
However, a deployed monitor rarely offers these, since its detectors come already
fixed inside a running pipeline. We instead \textit{keep every detector fixed} and use the
label to decide how much to trust each, placing \method{} with methods that adjust
rather than retrain a model \citep{lee2026distpfn,martinez2026conformal}.

\section{Problem Setup}
\label{sec:setup}

\noindent\textbf{Task.}
We are given one TS of length $T$ and a pool of $N$ detectors. Detector $i$ produces a
score vector $s_i \in \mathbb{R}^T$, where a \textit{higher} score marks a \textit{more anomalous}
time point. The binary ground truth $y \in \{0,1\}^T$ marks the anomalous points, and a single
anomaly event is defined as a stretch of consecutive ones. The task is to turn the $N$ score vectors
into the single score vector $z$. We judge a score vector with a ranking measure
$A(y, s) \in [0,1]$, which returns how well $s$ ranks the anomalous points above the
normal ones, and we read it on a set of time points by restricting $y$ and $s$ to that set.

\begin{wrapfigure}{r}{0.46\textwidth}
\vspace{-12pt}
\centering
\includegraphics[width=0.44\textwidth]{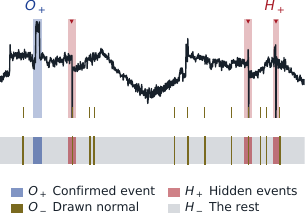}
\vspace{-4pt}
\caption{Notation for the four sets.}
\label{fig:sets}
\vspace{-20pt}
\end{wrapfigure}

\noindent\textbf{Confirmed events.}
An operator confirms $k$ of the events, with those time points written $O_+$, and every remaining event
stays hidden from the method and is used only for evaluation. We write $v_I$ for the entries a vector $v$ holds at a set $I$ of time points. Note that the pool is fixed before any label
arrives and \textit{no detector is ever retrained}. Throughout, $k$ is one unless we say
otherwise, which is where a deployment starts.

\noindent\textbf{Observed and hidden parts.}
As a ranking measure reads the gap between 1) what
a detector gives an anomaly and 2) what it gives a normal point, we cannot rank detectors
on the confirmed anomalous points $O_+$ alone. We therefore pair $O_+$ with an equal
number of time points drawn uniformly and treated as normal, denoted as $O_-$, and we
write $O$ for that union and $H$ for everything else as
\begin{equation}
\label{eq:obs}
O \;=\; O_+ \cup O_-, \qquad
H \;=\; H_+ \cup H_- \;=\; \{1,\dots,T\} \setminus O,
\end{equation}
where $H_+$ and $H_-$ hold the anomalous and the normal time points the method never
reads, as shown in Figure~\ref{fig:sets}. While randomly drawn points in $O_-$ may include a
few anomalies, we find the method effective under this noise, since anomalies occupy a
small share of a series (Appendix~\ref{app:neg}).

\noindent\textbf{Oracle gap.}
We write $A_H(s) = A(y_H, s_H)$ for the accuracy of a score vector on the hidden part,
which is where every number we report is measured. A practitioner
without labels has no reason to prefer one detector over another, which makes the
natural reference the accuracy of a detector drawn at random,
$A_{\text{rand}} = \frac{1}{N}\sum_i A_H(s_i)$. The best detector for this series,
$A_{\text{max}} = \max_i A_H(s_i)$, is unknown at deployment time and serves as an upper
bound. We report where a combined score vector $z$ lands between the two,
\begin{equation}
\label{eq:recovery}
\rho(z) \;=\; \frac{A_H(z) - A_{\text{rand}}}{A_{\text{max}} - A_{\text{rand}}}.
\end{equation}
A value of zero, i.e., $A_H(z) = A_{\text{rand}}$,
indicates that $z$ is no better than a detector drawn at random, while a
value of one, i.e., $A_H(z) = A_{\text{max}}$, indicates that it matches the best
detector for the TS.

\section{Method}
\label{sec:method}

\begin{figure}[t]
\centering
\includegraphics[width=0.98\textwidth]{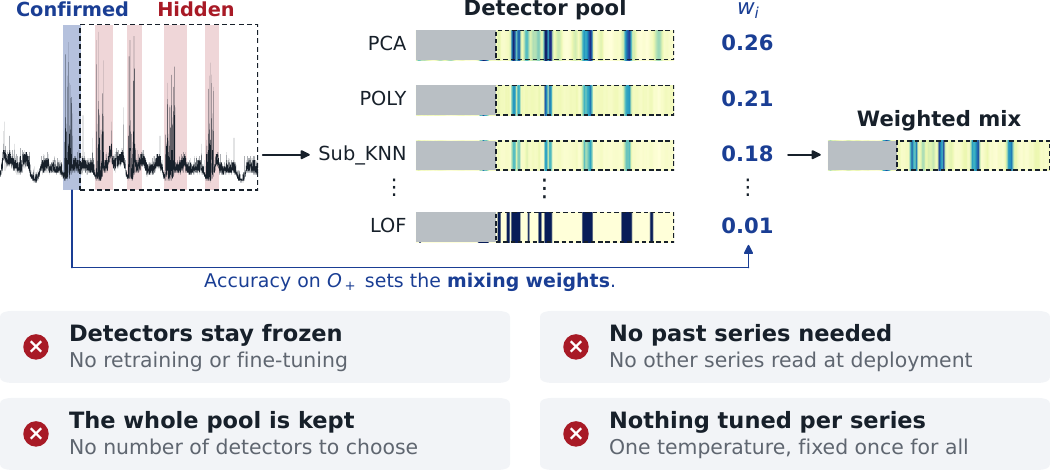}
\caption{\textbf{Overall framework of \method{}.} Each detector is scored on the confirmed event alone, while red events stay hidden, and a softmax over those accuracies gives the weight $w_i$ that the mix follows. The boxes list what the pipeline never does, and \kshot{} runs it unchanged on more events.}
\label{fig:concept}
\end{figure}

\subsection{\method{}: One Event to a Weighted Mix}
\label{sec:algo}

\noindent\textbf{Scoring the pool.}
We standardize every score vector to zero mean and unit variance, which lets vectors
from different detectors be added without one scale dominating the sum. We then score
each detector on the observed part with the same measure we report,
\begin{equation}
\label{eq:acc}
a_i \;=\; A_O(s_i) \;=\; A\big(y_O,\; s_{i,O}\big), \qquad i = 1,\dots,N,
\end{equation}
where $y_O$ and $s_{i,O}$ read the label vector and the score vector at the
$|O| = 2|O_+|$ observed time points, which means $A_O$ and $A_H$ differ in nothing but
the part of the series they are read on.

\noindent\textbf{Turning accuracy into weights.}
We map the accuracies to weights with a softmax at temperature $\tau$ and combine the
score vectors with those weights,
\begin{equation}
\label{eq:weights}
w_i \;=\; \frac{\exp\!\big((a_i - \max_j a_j)/\tau\big)}
               {\sum_{l}\exp\!\big((a_l - \max_j a_j)/\tau\big)},
\qquad
z \;=\; \sum_i w_i\, s_i .
\end{equation}
Subtracting $\max_j a_j$ only keeps the exponentials in range. A large $\tau$ spreads the
weight across all detectors, while a small one keeps it on the few that scored best, and
Appendix~\ref{app:theory} shows that $w$ is the entropy-regularized choice between the two. We choose $\tau$ once on held-out series, i.e., a deployment needs
no corpus of its own (Appendix~\ref{app:tau}). As shown in Figure~\ref{fig:temp}, it stays
effective over a wide range. Every decision is then made on $z$ rather than on any single detector. Note that the
label neither trains a detector nor tunes $\tau$, and costs one pass over $O$.

\subsection{From \method{} to \kshot{}}
\label{sec:nshot}

\noindent\textbf{Events arrive in order.}
A running monitor keeps raising alarms, and an operator keeps confirming them. \textit{Nothing in \method{} is tied to the first one.} Write $C_1,\dots,C_k$ for the windows confirmed so
far, in the order they arrive. The observed part grows with them,
\begin{equation}
\label{eq:obs-k}
O_{+,k} \;=\; \bigcup_{j\le k} C_j, \qquad O_k \;=\; O_{+,k} \cup O_{-,k},
\end{equation}
where $O_{-,k}$ again holds an equal number of points drawn uniformly and treated as
normal, and $H_k$ is the hidden part that is left, which shrinks as $k$ grows.

\noindent The procedure of Section~\ref{sec:algo} is unchanged, except that it now reads $O_k$
in place of $O$,
\begin{equation}
\label{eq:weights-k}
a^{(k)}_i \;=\; A_{O_k}(s_i),
\qquad
w^{(k)}_i \;=\; \frac{\exp\!\big((a^{(k)}_i - \max_j a^{(k)}_j)/\tau\big)}
                     {\sum_{l}\exp\!\big((a^{(k)}_l - \max_j a^{(k)}_j)/\tau\big)},
\qquad
z^{(k)} \;=\; \sum_i w^{(k)}_i\, s_i .
\end{equation}
\method{} is the special case $k=1$ of \kshot{}, i.e., the least supervision a deployment
can hold, and Algorithm~\ref{alg:oneshot} gives its pseudocode.
Appendix~\ref{app:kshot} visualizes \kshot{} on one series.

\begin{algorithm}[t]
\caption{Pseudocode of \method{}.}
\label{alg:oneshot}
\begin{algorithmic}[1]
\Require Score vectors $s_1,\dots,s_N \in \mathbb{R}^T$,
         Confirmed anomalous points $O_+ \subset \{1,\dots,T\}$

\Ensure Combined score $z \in \mathbb{R}^T$, Alarm threshold $\theta \in \mathbb{R}$
\State $s_i \gets (s_i - \mathrm{mean}(s_i)) / \mathrm{std}(s_i)$ \quad for every $i$
       \Comment{One scale for all detectors}
\State Draw $|O_+|$ time points uniformly and label them normal as $O_-$
\State $O \gets O_+ \cup O_-$
\For{$i = 1$ to $N$}
  \State $a_i \gets A_O(s_i) \in [0,1]$
         \Comment{Accuracy on confirmed events}
\EndFor
\State $w_i \gets \exp((a_i - \max_j a_j)/\tau) \,/\, \sum_l \exp((a_l - \max_j a_j)/\tau)$
       \Comment{Softmax at $\tau$, $w \in \mathbb{R}^N$}
\State $z \gets \sum_i w_i\, s_i \in \mathbb{R}^T$
\State $\theta \gets \arg\max_{t} F_1(y_O,\, z_O > t)$
       \Comment{Alarm when $z$ exceeds $\theta$}
\State \Return $z,\ \theta$
\end{algorithmic}
\end{algorithm}

\subsection{Setting the Alarm Threshold}
\label{sec:thresh}

\noindent To decide whether a time point is anomalous, a monitor needs not only the
combined score $z$ but also a threshold $\theta$. A common choice is to alarm on a fixed fraction of the
points, which fixes the number of alarms regardless of how many anomalies the series
holds. The confirmed event lets us
measure $\theta$ instead, since every cut $t$ turns the mix into the alarms $z_O > t$
that we can score against the labels on $O$. We score those alarms with $F_1$ and keep
the cut that scores highest,
\begin{equation}
\label{eq:thresh}
\theta \;=\; \arg\max_{t} \; F_1\big(y_O,\; z_O > t\big),
\end{equation}
and the threshold that separates $O_+$ from $O_-$ is the one we deploy. Note that this \textit{costs nothing beyond the label we already
have}, and Appendix~\ref{app:deploy} reports what it gains.

\section{Experiments}
\label{sec:exp}

\subsection{Experimental Setup}
\label{sec:expsetup}

\noindent\textbf{Dataset.}
We use both the univariate and the multivariate collection of TSB-AD~\citep{liu2024elephant}, a
curated AD benchmark of \ntotal{} and 180 series. Following the protocol of prior benchmark
studies~\citep{liu2024elephant,paparrizos2022tsb}, we report the two separately rather than as
one number, since the two are scored by different pools. The main text reads the univariate
collection and Appendix~\ref{app:mv} reads the multivariate one. We take the released score
vectors as they are, which keeps every detector at the hyperparameters the benchmark tuned. We
drop a detector whose scores are constant or near-constant, since those make the oracle look
perfect for reasons unrelated to detection, leaving 10 to 15 candidates per series.

\noindent\textbf{Detectors \& metrics.}
The pool holds three kinds of detector, statistical methods, deep detectors and time series
foundation models, and Appendix~\ref{app:pools} lists every one of them with the counts. Every
comparison uses four ranking measures, AUC-PR, AUC-ROC, VUS-PR and
VUS-ROC~\citep{paparrizos2022volume}. Note that $\rho$ and every temperature we choose are read
under VUS-PR alone.

\noindent\textbf{Eligible series.}
A series needs a second event to evaluate on once the first is confirmed, and a
confirmed event long enough to score. Of the \ntotal{}
univariate series, \nsingle{} hold one event, \ndegenerate{} carry constant labels,
and \nshort{} have only events too short to score, which leaves \neligible{}. A further \ndropped{} drop out once the protocol
runs, since the event a trial draws can still be too short to score, which leaves
\nseries{}.

\noindent\textbf{Protocol.}
We draw the confirmed events uniformly at random, which keeps a single event from deciding the
result, and Table~\ref{tab:first} also reads the earliest events in time. We pair them with an
equal number of drawn normal points, and we keep a series only when $H_+$ still holds a whole
event. Note that every method
sees the same observed part and is scored on the same hidden part.

\subsection{Main Experiments}
\label{sec:mainexp}
\label{sec:main}

\IfFileExists{tables/detectors.tex}{%
\noindent\textbf{Comparison with single detector.}
A practitioner who cannot choose a detector fixes one for every series, and we first
ask whether one confirmed event is worth more than any fixed choice. To answer it, we score
every detector in the pool on every series and compare each one against \method{}.
As shown in Table~\ref{tab:detectors}, \method{} reaches a higher accuracy than the
strongest single detector, and no single detector wins on more than a quarter of the
series. Note that several detectors land below a randomly drawn one, which means
fixing the wrong detector costs more than not choosing at all.

\begin{table}[!t]
\centering
\caption{\textbf{Comparison with single detector.} Each of the first rows fixes one
detector for every series, which is what a practitioner does without a way to choose.
Wins is how often a row scores best and $\rho$ is the share of the oracle gap it
closes, both read under VUS-PR. \method{} outperforms every single detector under all
four measures, even the best detector on the confirmed window.}
\label{tab:detectors}
\small
\setlength{\tabcolsep}{3.4pt}
\begin{tabular}{l|c|cccc|c}
\toprule
Method & Wins & AUC-PR & AUC-ROC & VUS-PR & VUS-ROC & $\rho$ \\
\midrule
Sub\_KNN \citep{ramaswamy2000efficient} & 17.1 & 0.521 & 0.854 & 0.623 & 0.875 & 60.5\% \\
PCA \citep{shyu2003novel} & 23.7 & 0.494 & 0.786 & 0.537 & 0.815 & 38.9\% \\
Sub\_PCA \citep{shyu2003novel} & 8.6 & 0.504 & 0.786 & 0.531 & 0.819 & 37.2\% \\
IForest \citep{liu2008isolation} & 9.2 & 0.468 & 0.785 & 0.501 & 0.829 & 29.8\% \\
Sub\_IForest \citep{liu2008isolation} & 0.0 & 0.460 & 0.787 & 0.491 & 0.833 & 27.1\% \\
POLY \citep{liu2024elephant} & 5.3 & 0.406 & 0.762 & 0.482 & 0.797 & 24.8\% \\
Sub\_HBOS \citep{goldstein2012histogram} & 4.6 & 0.433 & 0.768 & 0.461 & 0.794 & 19.5\% \\
Sub\_LOF \citep{breunig2000lof} & 13.2 & 0.232 & 0.679 & 0.352 & 0.726 & -7.9\% \\
SR \citep{ren2019time} & 2.0 & 0.320 & 0.738 & 0.351 & 0.826 & -8.3\% \\
FFT \citep{liu2024elephant} & 4.6 & 0.264 & 0.655 & 0.335 & 0.707 & -12.2\% \\
MatrixProfile \citep{yeh2016matrix} & 5.9 & 0.300 & 0.693 & 0.335 & 0.744 & -12.3\% \\
KMeansAD\_U \citep{liu2024elephant} & 5.9 & 0.288 & 0.704 & 0.315 & 0.722 & -17.4\% \\
LOF \citep{breunig2000lof} & 0.0 & 0.142 & 0.570 & 0.171 & 0.669 & -53.9\% \\
\midrule
Score average of all & -- & 0.551 & \sd{0.915} & 0.596 & \sd{0.942} & 53.6\% \\
Best on labeled window & -- & \sd{0.597} & 0.911 & \sd{0.672} & 0.935 & 73.0\% \\
\rowcolor{yellow!18} \method{} & -- & \hi{0.642} & \hi{0.941} & \hi{0.710} & \hi{0.961} & \hi{82.8\%} \\
\bottomrule
\end{tabular}

\vspace{1.1\baselineskip}
\caption{\textbf{Comparison with ensemble methods.} Each row combines the same detector
pool on the same series, and the last column is a signed-rank test of that row
against \method{}. \method{} closes the largest share of the gap and leads both
precision-recall measures.}
\label{tab:main}
\small
\setlength{\tabcolsep}{6.2pt}
\begin{tabular}{l|cccc|cc}
\toprule
Method & AUC-PR & AUC-ROC & VUS-PR & VUS-ROC & $\rho$ & $p$ \\
\midrule
\multicolumn{7}{l}{\textit{No labels}} \\
\quad Random detector & 0.337 & 0.733 & 0.384 & 0.784 & 0.0\% & $<$0.001 \\
\quad Score average & 0.551 & 0.915 & 0.596 & 0.942 & 53.6\% & $<$0.001 \\
\quad Rank average & 0.371 & 0.861 & 0.413 & 0.899 & 7.5\% & $<$0.001 \\
\midrule
\multicolumn{7}{l}{\textit{One confirmed event}} \\
\quad Best on labeled window & 0.597 & 0.911 & 0.672 & 0.935 & 73.0\% & $<$0.001 \\
\quad Top-2 average & 0.622 & \sd{0.932} & 0.702 & 0.954 & 80.5\% & 0.135 \\
\quad Top-3 average & 0.628 & \hi{0.941} & \sd{0.704} & 0.958 & \sd{81.2\%} & 0.158 \\
\quad Top-10 average & 0.598 & 0.931 & 0.649 & 0.954 & 67.3\% & $<$0.001 \\
\quad Linear weighting & \sd{0.633} & \hi{0.941} & 0.689 & \hi{0.962} & 77.3\% & $<$0.001 \\
\rowcolor{yellow!18} \quad \method{} & \hi{0.642} & \hi{0.941} & \hi{0.710} & \sd{0.961} & \hi{82.8\%} & -- \\
\midrule
\rowcolor{black!6} Oracle (Best detector for each) & 0.701 & 0.960 & 0.779 & 0.975 & 100.0\% & $<$0.001 \\
\bottomrule
\end{tabular}

\end{table}

}{}

\noindent\textbf{Comparison with ensemble methods.}
To ask how well a single confirmed event can be spent, we compare \method{} with every rule
that reads the same event, together with the ensembles that use no labels. As shown in
Table~\ref{tab:main}, \method{} closes more of the gap to the best detector than any alternative.
We attribute this gain to the soft weighting itself, since the linear weighting reads the same
accuracies and still falls behind. While the top-$K$ averages come closest, no fixed $K$ is the
best rule on more than a third of the series (Appendix~\ref{app:weights}). Note that \method{}
scores higher than every row on more than half of the series under all four measures.

\subsection{Ablation Study}
\label{sec:analysis}
\leavevmode\vspace{-\baselineskip}

\begin{wrapfigure}{r}{0.32\textwidth}
\vspace{-1.2\baselineskip}
\centering
\includegraphics[width=\linewidth]{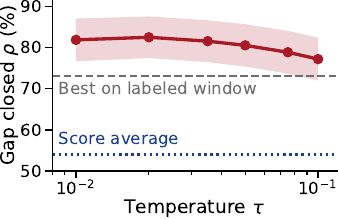}
\caption{Sensitivity to $\tau$.
}
\label{fig:temp}
\vspace{-\baselineskip}
\end{wrapfigure}
\noindent\textbf{Effect of temperature.}
To ask how carefully the temperature has to be set, we sweep it over \ntemps{} values
on the same series. As shown in
Figure~\ref{fig:temp}, the curve sits above the best detector on the window everywhere
and far above the score average. The curve is flat over half an
order of magnitude, where a signed-rank test separates no temperature from the best
one. A practitioner therefore picks a
temperature once rather than tuning one per series.

\IfFileExists{tables/pool.tex}{%
\noindent\textbf{Effect of detector diversity.}
To measure the effectiveness of diverse detectors, we compare three pools of growing
diversity on univariate and multivariate series. The third pool (Window lengths) adds no
new method and runs the same detectors at nine window lengths, as an operator would
without knowing the right one (Appendix~\ref{app:pools}). As shown in Tables~\ref{tab:pool} and~\ref{tab:mv},
\method{} leads every pool on both collections, and on univariate series it gains most
on the widest one. This indicates that diversity pays only where weights can use it.
}{}

\IfFileExists{tables/mv.tex}{%
\IfFileExists{figs/fig_pool.pdf}{%
\begin{wrapfigure}{r}{0.36\textwidth}
\vspace{-1.2\baselineskip}
\centering
\includegraphics[width=\linewidth]{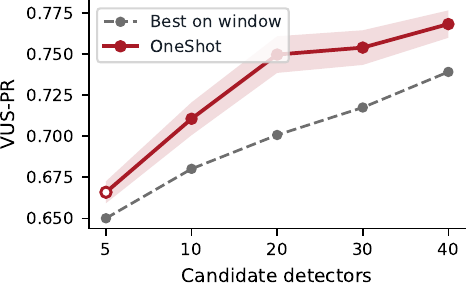}
\caption{Effect of detector count
}
\label{fig:pool}
\vspace{-\baselineskip}
\end{wrapfigure}
\noindent\textbf{Effect of detector count.}
While Table~\ref{tab:pool} widens the detector pool by model categories,
Figure~\ref{fig:pool} shows the effect of pool size alone, by sampling a fixed number
of candidates at random from the widest pool. The figure demonstrates that a larger pool
raises accuracy, and that \method{} leads the window winner at every size. The
shaded band is one standard error of the per-series difference between the two lines.
}{}

\begin{table}[t]
\begin{minipage}[t]{0.485\textwidth}
\centering
\caption{Detector diversity -- Univariate}
\label{tab:pool}
\footnotesize
\setlength{\tabcolsep}{1pt}
\begin{tabular*}{\linewidth}{@{\extracolsep{\fill}}l|ccc@{}}
\toprule
& Score & Best on & \\
\multirow{-2}{*}{Pool (\# Detectors)} & average & window & \multirow{-2}{*}{\method{}} \\
\midrule
Statistical (18) & 0.598 & 0.675 & \hi{0.712} \\
+ DL (29) & 0.612 & 0.672 & \hi{0.715} \\
+ Window lengths (77) & 0.610 & 0.682 & \hi{0.732} \\
\bottomrule
\end{tabular*}

\end{minipage}
\hfill
\begin{minipage}[t]{0.485\textwidth}
\centering
\caption{Detector diversity -- Multivariate}
\label{tab:mv}
\footnotesize
\setlength{\tabcolsep}{1pt}
\begin{tabular*}{\linewidth}{@{\extracolsep{\fill}}l|ccc@{}}
\toprule
& Score & Best on & \\
\multirow{-2}{*}{Pool (\# Detectors)} & average & window & \multirow{-2}{*}{\method{}} \\
\midrule
Statistical (11) & 0.280 & 0.382 & \hi{0.401} \\
+ DL (22) & 0.581 & 0.614 & \hi{0.633} \\
+ Window lengths (31) & 0.531 & 0.563 & \hi{0.591} \\
\bottomrule
\end{tabular*}

\end{minipage}
\end{table}

\IfFileExists{tables/grow_c1.tex}{\begin{table}[t]
\begin{minipage}[t]{0.38\textwidth}
\centering
\caption{\textbf{Effect of which event is confirmed.} Rows give $k$,
uniform above and earliest in time below.}
\label{tab:first}
\scriptsize
\setlength{\tabcolsep}{6.27pt}
\begin{tabular}{l|ccc}
\toprule
\multicolumn{1}{c|}{Confirmed} & Score & Best on & \\
\multicolumn{1}{c|}{events} & average & window & \multirow{-2}{*}{\kshot{}} \\
\midrule
\multicolumn{4}{l}{\textit{Uniformly chosen}} \\
$k{=}1$ (138) & 0.599 & 0.677 & \hi{0.710} \\
$k{=}2$ (116) & 0.581 & 0.668 & \hi{0.703} \\
$k{=}3$ (70) & 0.456 & 0.558 & \hi{0.609} \\
\midrule
\multicolumn{4}{l}{\textit{Earliest in time}} \\
$k{=}1$ (138) & 0.610 & 0.673 & \hi{0.703} \\
$k{=}2$ (116) & 0.592 & 0.665 & \hi{0.704} \\
$k{=}3$ (70) & 0.472 & 0.562 & \hi{0.601} \\
\bottomrule
\end{tabular}

\end{minipage}
\hfill
\begin{minipage}[t]{0.58\textwidth}
\centering
\caption{\textbf{Effect of more labels.} Rows give the events $m$ a series
carries and columns $k$, where $k{=}0$ leaves the weights flat. Cells are
the gap closed $\rho$ (\%).}
\label{tab:grow}
\scriptsize
\setlength{\tabcolsep}{4.12pt}
\begin{tabular}{c|c|c|cccccccc}
\toprule
& & & \multicolumn{8}{c}{w/ \kshot{}} \\
\cmidrule(l){4-11}
\multirow{-2.5}{*}{$m$} & \multirow{-2.5}{*}{Series} & \multirow{-2.5}{*}{$k{=}0$} & $1$ & $2$ & $3$ & $4$ & $5$ & $6$ & $7$ & $8$ \\
\midrule
2 & 105 & 67.4 & 86.4 & \hi{88.4} & -- & -- & -- & -- & -- & -- \\
3 & 57 & 58.9 & 80.4 & 86.9 & \hi{89.3} & -- & -- & -- & -- & -- \\
4 & 48 & 54.2 & 75.8 & 77.4 & 77.2 & \hi{83.0} & -- & -- & -- & -- \\
5 & 37 & 67.4 & 81.2 & 88.0 & 90.3 & 91.7 & \hi{95.7} & -- & -- & -- \\
6 & 38 & 65.9 & 79.6 & 89.0 & 89.9 & 89.9 & 93.7 & \hi{95.8} & -- & -- \\
7 & 37 & 61.8 & 74.7 & 82.3 & 82.1 & 83.8 & 87.3 & 88.0 & \hi{88.9} & -- \\
8 & 29 & 60.8 & 68.4 & 84.4 & 83.1 & 89.8 & 89.8 & 93.0 & 93.1 & \hi{93.1} \\
\bottomrule
\end{tabular}

\end{minipage}
\end{table}
}{}
}{}

\IfFileExists{tables/first.tex}{%
\noindent\textbf{Effect of which event is confirmed.}
Our main protocol draws the confirmed event at random, while a deployment confirms the
earliest events first, as the budget sweep of Table~\ref{tab:grow} does. To ask whether the
order matters, we compare the two, holding the series fixed so only the choice changes. As shown in Table~\ref{tab:first}, the two orders
reach almost the same accuracy at every $k$, and \kshot{} leads the window winner in both.
Furthermore, the margin over the window winner widens with $k$ in both.

}{}

\noindent\textbf{Effect of more labels.}
To ask what a further label is worth, we fix the scoring set
and vary only how many of the earliest events an operator confirms. As shown in
Table~\ref{tab:grow}, the first confirmed event lifts the gap closed by 8 to 22 points
over no label at all, and every further event adds to it.

\subsection{Analysis}
\label{sec:moreanalysis}
\leavevmode\vspace{-\baselineskip}

\IfFileExists{tables/noisebig.tex}{%
\begin{wraptable}{r}{0.50\textwidth}
\vspace{-1.1\baselineskip}
\centering
\caption{\textbf{Types of imperfect labels.} The event spans $[s, e)$ and
$L = e - s$ is its length.}
\label{tab:noisekinds}
\footnotesize
\setlength{\tabcolsep}{4.35pt}
\begin{tabular}{@{}l|c|c@{}}
\toprule
& & Event \\
\multirow{-2}{*}{Label} & \multirow{-2}{*}{Confirmed window} & covered \\
\midrule
Exact                  & $[s,\; e)$                    & 100\% \\
Half of it             & $[s,\; s + L/2)$              & 50\%  \\
Quarter of it          & $[s,\; s + L/4)$              & 25\%  \\
Shifted by half        & $[s \pm L/2,\; e \pm L/2)$    & 50\%  \\
Shifted by its length  & $[s \pm L,\; e \pm L)$        & 0\%   \\
Normal window          & $[a,\; a + L)$                & 0\%   \\
\bottomrule
\end{tabular}
\vspace{-\baselineskip}
\end{wraptable}

\noindent\textbf{Robustness to imperfect labels.}
In practice, a confirmation an operator gives is rarely exact (i.e., it can be late,
partial, or placed on a window that holds no anomaly). To validate the effectiveness of \method{} under
such labels, we corrupt the confirmed event in the ways of
Table~\ref{tab:noisekinds}.
As shown in Table~\ref{tab:noisebig}, \method{} leads in every row, and its margin
widens as the label degrades. Furthermore, we define \textit{Retention}, where each row is
divided by the exact-label row of the same column, to read how much of its own accuracy
a rule keeps as the label gets worse. \method{} outperforms the window winner on
this measure as well, in every corrupted row.

\begin{table}[t]
\centering
\caption{\textbf{Robustness to imperfect labels: Performance under each corruption.} Each row corrupts the confirmed event in the way Table~\ref{tab:noisekinds} gives. Retention$^{*}$ divides a row by the exact-label row of the same column, which reads how much a rule keeps as the label degrades.}
\label{tab:noisebig}
\label{tab:noise}
\footnotesize
\begin{tabular}{l|cc|c|cc|c|c}
\toprule
\multirow{3.5}{*}{Confirmed event} & \multicolumn{3}{c|}{VUS-PR} & \multicolumn{3}{c|}{Retention$^{*}$} & \multirow{3.5}{*}{$p$} \\
\cmidrule(lr){2-4}\cmidrule(lr){5-7}
 & \cthead{Best on\\window} & \cthead{\method{}} & \cthead{Imp (\%)} & \cthead{Best on\\window} & \cthead{\method{}} & \cthead{Imp (\%)} & \\
\midrule
Exact & 0.568 & \hi{0.699} & \hi{+23.0} & 100\% & 100\% & +0.0 & $<$0.001 \\
Half of it & 0.519 & \hi{0.692} & \hi{+33.3} & 91\% & \hi{99\%} & \hi{+8.4} & $<$0.001 \\
Quarter of it & 0.476 & \hi{0.672} & \hi{+41.0} & 84\% & \hi{96\%} & \hi{+14.7} & $<$0.001 \\
Shifted by half & 0.543 & \hi{0.668} & \hi{+22.9} & 96\% & \hi{96\%} & +0.0 & $<$0.001 \\
Shifted by its length & 0.419 & \hi{0.554} & \hi{+32.1} & 74\% & \hi{79\%} & \hi{+7.5} & $<$0.001 \\
Normal window & 0.314 & \hi{0.468} & \hi{+49.0} & 55\% & \hi{67\%} & \hi{+21.2} & $<$0.001 \\
\bottomrule
\end{tabular}
\end{table}

\begin{figure}[t]
\begin{minipage}[b]{0.36\textwidth}
\centering
\includegraphics[width=\linewidth]{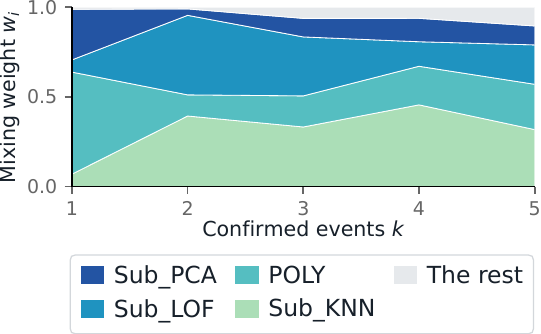}
\caption{\textbf{Soft weights by event.} The mixing weights on a single series
change after each confirmed event, as the events arrive in order.}
\label{fig:wflow}
\end{minipage}
\hfill
\begin{minipage}[b]{0.565\textwidth}
\centering
\includegraphics[width=\linewidth]{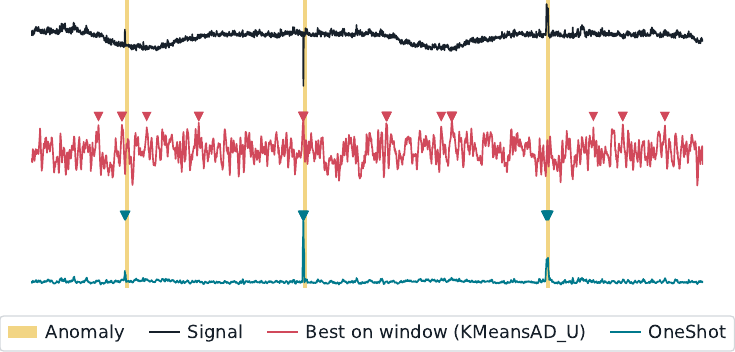}
\caption{\textbf{AD results: Best on window vs. \method{}.} Triangles mark where
each rule raises an alarm.}
\label{fig:cases}
\end{minipage}
\end{figure}
}{}

\IfFileExists{figs/fig_wflow.pdf}{%
\noindent\textbf{Evolution of soft weights.}
To see what the extra labels do to the mixture itself, we follow one series and read
the weights again after each confirmation. As shown in Figure~\ref{fig:wflow}, the weight moves
from one detector to others as events arrive, and it never gathers on a single one.

}{}

\IfFileExists{figs/fig_cases.pdf}{%
\noindent\textbf{AD visualization.}
To see where the best detector on the confirmed window and \method{} raise alarms, we
plot the signal and both scores on one series. As shown in Figure~\ref{fig:cases}, the window winner
raises false alarms across the series, while \method{} alarms accurately on the hidden
events.

}{}

\IfFileExists{tables/cost.tex}{%
\noindent\textbf{Efficiency analysis.}
The cost of \method{} splits in two, 1) the pool a deployment already runs and 2) the
work we add on top of it. We time each step of \method{} and every detector in the
pool on \ncost{} series, in CPU seconds. As shown in Table~\ref{tab:cost}, \method{} adds
\costms{}~ms, less than the median detector of the pool takes alone, to a pipeline that
already spends \costpool{}~s on the scores it reads. Note that the detectors are independent and a
deployment can run them in parallel. Furthermore, the pool is not a cost \method{}
introduces, since every row of Table~\ref{tab:main} reads the same pool.

}{}

\IfFileExists{tables/agree.tex}{%
\noindent\textbf{Source of the gain: Disagreement between the two rankings.}
Since the confirmed window covers only a single event, the detector ranking it produces need
not match the ranking on the rest of the series. The window winner is therefore a
single choice made from an uncertain ranking, and a soft weight replaces that choice with a
weighted combination. To validate this, we sort the series into quarters by how closely 1)
\textit{the ranking on the window} agrees with 2) \textit{the ranking on the hidden part},
since \method{} should gain nothing where the window already ranks them correctly. As
shown in Table~\ref{tab:agree}, the gain is large where the two rankings disagree and vanishes
where they agree, indicating that \method{} helps when the window is wrong and does no harm
when it is right.

\begin{table}[t]
\begin{minipage}[t]{0.475\textwidth}
\centering
\caption{\textbf{Efficiency analysis.} \method{} adds a negligible thousandth of the pool it reads.}
\label{tab:cost}
\small
\setlength{\tabcolsep}{3pt}
\begin{tabular}{l|c}
\toprule
What is timed & CPU seconds \\
\midrule
\multicolumn{2}{l}{\textit{Detector}} \\
\quad Fastest detector (SR) & 0.0098 \\
\quad Median detector & 0.0482 \\
\quad Slowest detector (EIF) & 44.8237 \\
\rowcolor{black!6} Pool of 17 detectors & 44.9453 \\
\midrule
\multicolumn{2}{l}{\textit{\method{}}} \\
+ Standardizing the pool & 0.0015 \\
+ Scoring the pool on the window & 0.0055 \\
+ Softmax and weighted sum & 0.0003 \\
+ Threshold search on the window & 0.0374 \\
\rowcolor{yellow!18} Total & 0.0447 \\
\bottomrule
\end{tabular}

\end{minipage}
\hfill
\begin{minipage}[t]{0.505\textwidth}
\centering
\caption{\textbf{Rank agreement and gain.} Series are split into quarters by how well the ranking of the detectors on the window agrees with their ranking on the hidden part. The gain is large on every quarter whose rankings disagree, and small elsewhere.}
\label{tab:agree}
\small
\setlength{\tabcolsep}{2.3pt}
\begin{tabular}{l|ccc|c}
\toprule
\multicolumn{1}{c|}{Agreement on$^{*}$} & Best on & & & \\
\multicolumn{1}{c|}{the window} & window & \multirow{-2}{*}{\method{}} & \multirow{-2}{*}{Gain} & \multirow{-2}{*}{Imp.} \\
\midrule
1Q (Agree) & 0.924 & \hi{0.925} & +0.001 & +0.2\% \\
2Q & 0.719 & \hi{0.766} & +0.047 & +6.5\% \\
3Q & 0.628 & \hi{0.692} & +0.063 & +10.1\% \\
4Q (Disagree) & 0.483 & \hi{0.527} & +0.045 & +9.2\% \\
\bottomrule
\end{tabular}
\\[2pt]{\footnotesize $^{*}$\,Spearman correlation between the two rankings.}

\end{minipage}
\end{table}

}{}

\IfFileExists{figs/fig_rank.pdf}{%
\begin{wrapfigure}{r}{0.44\textwidth}
\vspace{-1.2\baselineskip}
\centering
\includegraphics[width=\linewidth]{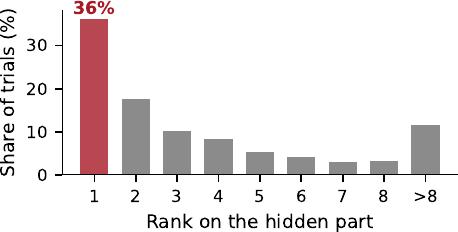}
\caption{Rank of the window winner.
}
\label{fig:rank}
\vspace{-\baselineskip}
\end{wrapfigure}
\noindent\textbf{Reliability of the window winner.}
A rule that commits to one detector is right exactly as often as the window's top detector is
the best one on the rest of the series. We record the rank that the window's top detector holds
on the hidden part, once for each confirmed event on each series. As shown in
Figure~\ref{fig:rank}, the window's top detector is the true best on only a third of the
trials, and often lands far below. A soft weight spreads over the detectors the window scores
well, indicating that a wrong ranking costs it less than it costs a single pick.

}{}

\noindent\textbf{Other analysis.}
The appendix carries the experiments below, each with its tables and discussion.
\begin{itemize}[leftmargin=1.2em,itemsep=0pt,topsep=2pt,parsep=0pt]
\item \textit{Multivariate series} (App.~\ref{app:mv}): \method{} leads a smaller, harder pool on every experiment.
\item \textit{Results by detector} (App.~\ref{app:break}): The most frequent winner still loses on three quarters of them.
\item \textit{Wins on individual series} (App.~\ref{app:metrics}): \method{} leads on more than half of them, every measure.
\item \textit{Setting the alarm threshold} (App.~\ref{app:deploy}): A threshold read off the window beats a fixed rate on F1.
\item \textit{How many detectors the weights blend} (App.~\ref{app:weights}): No fixed top-$K$ matches the effective count.
\item \textit{How the temperature is chosen} (App.~\ref{app:tau}): One value for every series, never read on that series.
\item \textit{What the detector pools hold} (App.~\ref{app:pools}): Every detector of every pool, with counts and citations.
\item \textit{Score preprocessing} (App.~\ref{app:prep}): What the near-constant filter drops, and what it costs the pool.
\item \textit{Sensitivity to the anomaly rate} (App.~\ref{app:rate}): The gain shrinks as the anomalies grow denser.
\item \textit{Protocol details} (App.~\ref{app:protocol}): How the normal half is drawn, and what a larger one buys.
\end{itemize}

\section{Conclusion}
\label{sec:conclusion}

We introduce \method{}, a label-efficient way to weight a detector pool from one confirmed
event: the scores on that event become a softmax weight rather than a choice of one detector.
Across \nseries{} univariate and \nmvmetricseries{} multivariate series, it beats the window
winner and both unsupervised ensembles, and matches the best top-$K$ average without choosing
$K$. Even so, the gain is bounded by the pool, since weighting cannot add a detector. Our confirmations
come from a benchmark, not an operator, whose errors Table~\ref{tab:noisebig} simulates, and
each series is weighted on its own. Both are next steps.

\section*{AI Use Statement}
We used generative AI tools as a general-purpose writing and coding aid: polishing and
tightening the wording of the manuscript, and drafting experiment, plotting and analysis
scripts. We did not use them to generate the research idea or to design \method{}. Every result in this paper comes from the released code run on public
benchmarks. All AI-assisted text was read and edited by the authors, and all AI-assisted code was
reviewed and re-run by the authors before its output entered the paper. We take responsibility
for the final content of this work, including text, claims, and artifacts produced with the aid
of generative AI.

\section*{Ethics Statement}
This work uses only public benchmark archives of time series and their released anomaly labels.
It involves no human subjects, no private or personally identifiable data, and no user study. The
confirmed anomaly that \method{} consumes is read from those labels, indicating that the operator in our
setting is simulated rather than real, and no operator was observed or recorded. We see no harm
from the method beyond the usual risk that any deployed monitor carries when it misses an
anomaly. The weights it produces are per series and can be inspected before the monitor is
trusted.

\section*{Reproducibility Statement}
The code for \method{} and for the evaluation protocol is available at
\url{https://github.com/seunghan96/one_shot}, and the benchmarks it runs on are public.
The detector pool and its hyperparameters are listed in Appendix~\ref{app:pools}, the datasets in
Appendix~\ref{app:data}, the metrics in Appendix~\ref{app:met}, the score preprocessing in
Appendix~\ref{app:prep}, and the temperature in Appendix~\ref{app:tau}. Every reported number is
an average over repeated draws of the confirmed event, and each series is seeded by a checksum of
its name, i.e., a rerun returns the same draws. Appendix~\ref{app:protocol} states how the observed
window and the normal half are drawn, and Appendix~\ref{app:scope} states the scope of our
claims and points to the result that answers each concern.

\bibliography{refs}
\bibliographystyle{iclr2027_conference}

\clearpage
\appendix

\section{Experimental Details}
\label{app:details}

\subsection{Metrics}
\label{app:met}

\noindent\textbf{Ranking measures.}
We read every comparison under four ranking measures, i.e., AUC-PR, AUC-ROC, VUS-PR and
VUS-ROC~\citep{paparrizos2022volume}. All four come from TSB-AD's own implementation at a sliding
window of 100 points. A ranking measure asks only whether anomalous points score above normal
ones. It needs no threshold, and \textit{the ordering is precisely what a weighted combination
changes}. Note that VUS-PR alone decides the weights and the temperature, and \textit{the
remaining three measures are read only after every choice has been made}.

\noindent\textbf{The four measures.}
Write $y \in \{0,1\}^T$ for the labels and $z \in \mathbb{R}^T$ for the score a method
produces. Sweeping a threshold $t$ over $z$ traces a true positive rate, a false
positive rate and a precision, and the first two measures are the areas these curves
enclose,
\begin{equation}
\text{AUC-ROC} \;=\; \int_{0}^{1} \text{TPR}\,d\,\text{FPR},
\qquad
\text{AUC-PR} \;=\; \int_{0}^{1} \text{Prec}\,d\,\text{Rec}.
\end{equation}
A point-wise reading is harsh on a detector that flags an event a few points early or
late. Every labeled anomaly range is therefore widened by $\ell$ points on each side
\citep{paparrizos2022volume}, the two areas are read on the widened labels, and they are
averaged over every buffer up to a length $L$,
\begin{equation}
\text{VUS-ROC} \;=\; \frac{1}{L}\int_{0}^{L} \text{AUC-ROC}(\ell)\,d\ell,
\qquad
\text{VUS-PR} \;=\; \frac{1}{L}\int_{0}^{L} \text{AUC-PR}(\ell)\,d\ell,
\end{equation}
where $\ell = 0$ recovers the point-wise reading and $L$ is the sliding window of 100
points. All four measures read only the ranking of $z$, and $\rho$ uses VUS-PR, the benchmark's
most reliable measure.

\vspace{10pt}
\subsection{Choice of the Temperature}
\label{app:tau}

\noindent\textbf{A single temperature for every series.}
The temperature $\tau$ is the only quantity \method{} does not read from the confirmed event,
and it therefore needs a rule of its own. The \ntemps{} candidates are \taulist{}, spaced
roughly geometrically from a near-argmax weight to a near-uniform one. We hold out one series at a time and apply to it the
candidate with the highest mean VUS-PR over all the others, i.e., \textit{no series takes part
in the choice that scores it}. The \nseries{} choices nevertheless agree on a single value,
$\tau = \taupick{}$. A deployment inherits that value without tuning anything itself, and
every $k$ in this paper is scored with it. A monitor that holds no corpus of its own can
take \taupick{} as a default, since Figure~\ref{fig:temp} sits above the window winner at
every candidate and a signed-rank test separates no temperature from the best one.

\noindent\textbf{Leave-one-out against a split in half.}
The obvious alternative splits the series into two halves and lets each half choose the
temperature for the other. As shown in Table~\ref{tab:tauproto}, one half does best at the
smallest candidate and the other at a much larger one. Each half therefore receives the value
that suits it worst, and the result falls below every fixed temperature. A choice that sees
more series suffers less from this effect, and leave-one-out sees all of them and matches the
best fixed temperature.

\begin{table}[h]
\centering
\caption{\textbf{How the temperature is chosen.} Every row reads the same runs and differs only in how much of the rest is used to choose $\tau$. No series ever takes part in the choice that scores it.}
\label{tab:tauproto}
\small
\begin{tabular}{l|c|cc}
\toprule
Choice of $\tau$ & Values picked & VUS-PR & $\rho$ \\
\midrule
\multicolumn{4}{l}{\textit{Chosen on the other series}} \\
\quad 2-fold & 0.01, 0.035 & 0.701 & 80.5\% \\
\quad 5-fold & 0.01, 0.02, 0.035 & 0.702 & 80.6\% \\
\quad 10-fold & 0.01, 0.02 & \sd{0.707} & 81.9\% \\
\quad Leave-one-out (Ours) & 0.02 & \hi{0.710} & 82.8\% \\
\midrule
\multicolumn{4}{l}{\textit{Fixed in advance}} \\
\quad $\tau = 0.01$ & -- & 0.709 & 82.5\% \\
\quad $\tau = 0.02$ & -- & 0.710 & 82.8\% \\
\quad $\tau = 0.05$ & -- & 0.703 & 80.8\% \\
\bottomrule
\end{tabular}
\end{table}

\clearpage
\section{Anomaly Detectors}
\label{app:pools}

\IfFileExists{tables/pools.tex}{%
\noindent\textbf{Construction of detector pools.}
Every detector scores every series once and does so before any label arrives, which means the
pool is fixed before \method{} sees anything. The first pool, \textit{Statistical} in
Table~\ref{tab:pool}, holds the statistical detectors TSB-AD provides. The second, \textit{Deep
and foundation (DL)}, adds every deep detector and time series foundation model our environment
supports. The third, \textit{+ Window lengths}, mostly varies a hyperparameter rather than
adding new methods. Since a practitioner cannot know the right window length in advance, it
keeps \nwindet{} of the detectors at all \nwinlen{} lengths, \winlist{} points. Those detectors
are \windets{}. Note that this window is the detector's own input length rather than the confirmed window,
i.e., \method{} never chooses it and only reads the scores the pool produces. The detectors in each of the three blocks
are listed in Table~\ref{tab:pools} and Table~\ref{tab:poolsmv}, for the univariate and the
multivariate collection respectively.

\begin{table}[h]
\centering
\caption{Univariate AD methods.}
\label{tab:pools}
\small
\begin{tabular}{@{}m{0.145\textwidth}|>{\centering\arraybackslash}m{0.045\textwidth}|m{0.70\textwidth}@{}}
\toprule
Block & Pool & Detectors added \\
\midrule
Statistical & 18 & CBLOF~\citep{he2003discovering}, COPOD~\citep{li2020copod}, EIF~\citep{hariri2021extended}, FFT~\citep{liu2024elephant}, HBOS~\citep{goldstein2012histogram}, IForest~\citep{liu2008isolation}, KMeansAD\_U~\citep{liu2024elephant}, KNN~\citep{ramaswamy2000efficient}, LOF~\citep{breunig2000lof}, MatrixProfile~\citep{yeh2016matrix}, PCA~\citep{shyu2003novel}, POLY~\citep{liu2024elephant}, SR~\citep{ren2019time}, Sub\_HBOS~\citep{liu2024elephant}, Sub\_IForest~\citep{liu2024elephant}, Sub\_KNN~\citep{liu2024elephant}, Sub\_LOF~\citep{liu2024elephant}, Sub\_PCA~\citep{liu2024elephant} \\
\midrule
Deep and foundation (DL) & 29 & AnomalyTransformer~\citep{xu2022anomaly}, CNN~\citep{liu2024elephant}, FITS~\citep{xu2024fits}, LSTMAD~\citep{malhotra2015long}, MOMENT\_ZS~\citep{goswami2024moment}, OFA~\citep{zhou2023one}, OmniAnomaly~\citep{su2019robust}, PatchTST~\citep{nie2023time}, TimesNet~\citep{wu2023timesnet}, TranAD~\citep{tuli2022tranad}, USAD~\citep{audibert2020usad} \\
\midrule
Window lengths & 77 & HSF~\citep{liu2024elephant} (9 window lengths), HSF\_Causal~\citep{liu2024elephant} (9 window lengths), HSF\_U~\citep{liu2024elephant} (9 window lengths), PaAno\_PAI~\citep{liu2024elephant} (9 window lengths), Left\_STAMPi~\citep{liu2024elephant}, M2N2~\citep{liu2024elephant}, MCD~\citep{rousseeuw1999fast}, MMPAD~\citep{liu2024elephant}, RobustPCA~\citep{candes2011robust}, StreamVAE~\citep{liu2024elephant}, Sub\_MCD~\citep{liu2024elephant}, Time\_RCD~\citep{liu2024elephant} \\
\bottomrule
\end{tabular}
\end{table}

\IfFileExists{tables/pools_mv.tex}{\begin{table}[h]
\centering
\caption{Multivariate AD methods.}
\label{tab:poolsmv}
\small
\begin{tabular}{@{}m{0.145\textwidth}|>{\centering\arraybackslash}m{0.045\textwidth}|m{0.70\textwidth}@{}}
\toprule
Block & Pool & Detectors added \\
\midrule
Statistical & 11 & CBLOF~\citep{he2003discovering}, COPOD~\citep{li2020copod}, EIF~\citep{hariri2021extended}, HBOS~\citep{goldstein2012histogram}, IForest~\citep{liu2008isolation}, KMeansAD~\citep{liu2024elephant}, KNN~\citep{ramaswamy2000efficient}, LOF~\citep{breunig2000lof}, OCSVM~\citep{scholkopf2001estimating}, PCA~\citep{shyu2003novel}, RobustPCA~\citep{candes2011robust} \\
\midrule
Deep and foundation (DL) & 22 & AnomalyTransformer~\citep{xu2022anomaly}, CNN~\citep{liu2024elephant}, FITS~\citep{xu2024fits}, LSTMAD~\citep{malhotra2015long}, MOMENT\_ZS~\citep{goswami2024moment}, OFA~\citep{zhou2023one}, OmniAnomaly~\citep{su2019robust}, PatchTST~\citep{nie2023time}, TimesNet~\citep{wu2023timesnet}, TranAD~\citep{tuli2022tranad}, USAD~\citep{audibert2020usad} \\
\midrule
Window lengths & 31 & PaAno\_PAI~\citep{liu2024elephant} (3 window lengths), Left\_STAMPi~\citep{liu2024elephant}, M2N2~\citep{liu2024elephant}, MCD~\citep{rousseeuw1999fast}, MMPAD~\citep{liu2024elephant}, StreamVAE~\citep{liu2024elephant} \\
\bottomrule
\end{tabular}
\end{table}
}{}
}{}

\clearpage
\section{Datasets}
\label{app:data}

\noindent\textbf{Benchmark and scores.}
All experiments run on TSB-AD~\citep{liu2024elephant}, a curated anomaly detection
benchmark that releases the score vector of every detector it provides on every series.
Every experiment reads those vectors unchanged, and \textit{no detector is ever
retrained or retuned on the series it scores}. A detector is dropped on a series
where its scores are constant or near-constant, since such a score carries no ranking
and only inflates the oracle.

\noindent\textbf{Univariate series.}
The univariate collection of TSB-AD holds \ntotal{} series drawn from \nsrcall{} sources, of
which Table~\ref{tab:datau} lists the \nsrcu{} that survive the filter below. Three conditions
remove a series before the protocol runs, i.e., 1) a single event leaves nothing to evaluate on
once it is confirmed, 2) a constant label gives no anomaly to find, and 3) an event under six
points is too short to score. These remove \nsingle{}, \ndegenerate{} and \nshort{} series,
which leaves \neligible{}. A further \ndropped{} series drop out during the protocol itself,
since the event a trial draws can again be too short to score, which leaves \nseries{}. The
series that remain carry three events and \ndetmed{} usable detectors at the median.

\noindent\textbf{Multivariate series.}
The multivariate collection of TSB-AD holds 180 series, and nine detectors score a series at
the median, against \ndetmed{} in the univariate collection. Table~\ref{tab:datamv} lists the
\nsrcmv{} sources it draws on. We apply the same eligibility rule as on the univariate series,
i.e., a series needs more than one event, a varying label and an event long enough to score,
which leaves \nmvmetricseries{} series. Note that how a detector reads the channels is a
question of its own \citep{lee2025channel}, and we change nothing about it here.
Appendix~\ref{app:mv} repeats every experiment on them.

\begin{table}[h]
\begin{minipage}[t]{0.49\textwidth}
\centering
\caption{\textbf{Univariate datasets.} The \nsrcu{} sources behind the \nseries{} univariate series. Events and Detectors are medians over the series of that source.}
\label{tab:datau}
\small
\setlength{\tabcolsep}{3pt}
\begin{tabular}{l|ccc}
\toprule
\multicolumn{1}{c|}{Source} & Series & Events & Detectors \\
\midrule
Exathlon & 30 & 3 & 13 \\
WSD & 17 & 5 & 14 \\
SMD & 15 & 3 & 14 \\
SVDB & 13 & 13 & 13 \\
OPPORTUNITY & 12 & 2 & 14 \\
NAB & 9 & 2 & 14 \\
TODS & 9 & 24 & 15 \\
IOPS & 8 & 16 & 14 \\
LTDB & 8 & 73 & 13 \\
MGAB & 8 & 4 & 14 \\
NEK & 5 & 2 & 14 \\
MITDB & 4 & 7 & 13 \\
YAHOO & 4 & 3 & 14 \\
MSL & 3 & 2 & 14 \\
SED & 2 & 19 & 14 \\
SMAP & 2 & 2 & 12 \\
CATSv2 & 1 & 5 & 14 \\
Daphnet & 1 & 6 & 14 \\
Power & 1 & 4 & 13 \\
\midrule
\textit{All} & 152 & -- & -- \\
\bottomrule
\end{tabular}

\end{minipage}
\hfill
\begin{minipage}[t]{0.49\textwidth}
\centering
\caption{\textbf{Multivariate datasets.} The \nsrcmv{} sources behind the \nmvmetricseries{} multivariate series. A source that also appears in Table~\ref{tab:datau} holds a different number of series there, since the two collections read the same data at different widths.}
\label{tab:datamv}
\small
\setlength{\tabcolsep}{3pt}
\begin{tabular}{l|ccc}
\toprule
\multicolumn{1}{c|}{Source} & Series & Events & Detectors \\
\midrule
Exathlon & 25 & 3 & 9 \\
SMD & 19 & 10 & 9 \\
LTDB & 4 & 53 & 6 \\
MSL & 4 & 2 & 9 \\
SMAP & 4 & 2 & 9 \\
SVDB & 4 & 37 & 6 \\
OPPORTUNITY & 3 & 2 & 8 \\
CATSv2 & 2 & 3 & 9 \\
GHL & 2 & 2 & 9 \\
SWaT & 2 & 5 & 8 \\
Daphnet & 1 & 6 & 9 \\
GECCO & 1 & 15 & 9 \\
Genesis & 1 & 3 & 9 \\
MITDB & 1 & 2 & 6 \\
\midrule
\textit{All} & 73 & -- & -- \\
\bottomrule
\end{tabular}

\end{minipage}
\end{table}

\clearpage
\section{Results by Detector}
\label{app:break}

\IfFileExists{tables/detail.tex}{%
A detector that leads on average need not lead on any one series, and we therefore read the
same \nseries{} series one detector at a time. As shown in Table~\ref{tab:detail}, the most
frequent winner still loses on more than three quarters of them, and every detector reaches
both first and last place.

\begin{table}[h]
\centering
\caption{\textbf{Results by detector.} Every detector of the first pool on the \nseries{} series the main tables read. Wins is how often it reaches the best VUS-PR, and the rank columns say where it lands otherwise.}
\label{tab:detail}
\small
\begin{tabular}{l|cc|ccc}
\toprule
& & & \multicolumn{3}{c}{Rank} \\
\cmidrule(lr){4-6}
\multirow{-2}{*}{Detector} & \multirow{-2}{*}{Wins (\%)} & \multirow{-2}{*}{Mean VUS-PR} & Median & Best & Worst \\
\midrule
PCA & 23.7 & 0.537 & 4 & 1 & 13 \\
Sub\_KNN & 17.1 & 0.623 & 3 & 1 & 12 \\
Sub\_LOF & 13.2 & 0.352 & 8 & 1 & 13 \\
IForest & 9.2 & 0.501 & 6 & 1 & 13 \\
Sub\_PCA & 8.6 & 0.531 & 5 & 1 & 13 \\
KMeansAD\_U & 5.9 & 0.315 & 9 & 1 & 13 \\
MatrixProfile & 5.9 & 0.335 & 9 & 1 & 13 \\
POLY & 5.3 & 0.482 & 6 & 1 & 13 \\
Sub\_HBOS & 4.6 & 0.461 & 8 & 1 & 13 \\
FFT & 4.6 & 0.335 & 9 & 1 & 13 \\
SR & 2.0 & 0.351 & 8 & 1 & 13 \\
Sub\_IForest & 0.0 & 0.491 & 6 & 2 & 13 \\
LOF & 0.0 & 0.171 & 11 & 3 & 13 \\
\bottomrule
\end{tabular}
\end{table}

}{}

\IfFileExists{figs/fig_winrate.pdf}{%
\vspace{10pt}
\clearpage
\section{Wins on Individual Series}
\label{app:metrics}

As shown in Table~\ref{tab:main}, several rows sit within a thousandth of each other under the
ROC measures, which leaves open how often each method actually wins. We therefore read the same
runs one series at a time, and each cell of Figure~\ref{fig:winrate} is the share of series on
which \method{} scores higher than that row under that measure. Every cell is above
\mbox{50\%}, indicating that \method{} wins the series-by-series comparison against every row
under every measure, and the closest of them is \winlow{}\%.

\begin{figure}[h]
\centering
\includegraphics[width=0.62\textwidth]{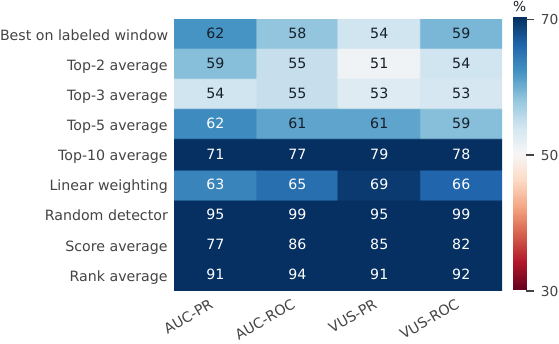}
\caption{\textbf{Win rate against each row, under each measure.} Each cell is the
share of the \nwinseries{} series where \method{} beats that row under that measure.
\textit{Every cell is above \mbox{50\%}}, i.e., \method{} wins the series-by-series
comparison against every rule under every measure.}
\label{fig:winrate}
\end{figure}

\noindent\textbf{Paired tests against every row.}
A difference between two means can come from a handful of series, and we therefore compare
every row of Table~\ref{tab:main} against \method{} one series at a time with a signed-rank
test. As shown in Table~\ref{tab:paired}, \method{} is ahead of every row that reads no
label, of the window winner, and of the linear weighting at $p<0.001$. The two rows a test
cannot separate from it are \textit{the top-2 and top-3 averages}, neither of which is a $K$ a
practitioner can know in advance.

\begin{table}[h]
\centering
\caption{\textbf{Signed-rank tests against \method{}.} Each row is the paired difference
\method{} minus that row over the \nseries{} series, under VUS-PR.}
\label{tab:paired}
\small
\begin{tabular}{l|ccc}
\toprule
Method & Median $\Delta$ & Series won & $p$ \\
\midrule
\multicolumn{4}{l}{\textit{No labels}} \\
\quad Random detector & +0.334 & 95.4\% & $<$0.001 \\
\quad Score average & +0.076 & 84.9\% & $<$0.001 \\
\quad Rank average & +0.258 & 91.4\% & $<$0.001 \\
\midrule
\multicolumn{4}{l}{\textit{One confirmed event}} \\
\quad Best on labeled window & +0.009 & 53.9\% & $<$0.001 \\
\quad Top-2 average & +0.000 & 50.7\% & 0.135 \\
\quad Top-3 average & +0.001 & 52.6\% & 0.158 \\
\quad Top-10 average & +0.035 & 78.9\% & $<$0.001 \\
\quad Linear weighting & +0.013 & 69.1\% & $<$0.001 \\
\midrule
\multicolumn{4}{l}{\textit{Upper bound}} \\
\rowcolor{black!6} Oracle (Best detector for each) & -0.028 & 19.7\% & $<$0.001 \\
\bottomrule
\end{tabular}

\end{table}
}{}
\clearpage
\section{The Effective Number of Detectors}
\label{app:weights}

\IfFileExists{figs/fig_neff.pdf}{%
\noindent\textbf{How many detectors carry weight.}
Unlike a top-$K$ average, a softmax keeps every detector and only varies how much weight each
one carries, and how concentrated that weight is changes from series to series. We measure that
concentration as the perplexity of the weight vector, which is one when a single detector holds
all the weight and the pool size when the weights are flat. As shown in Figure~\ref{fig:neff},
the effective count sits at \neffmed{} of \ndetmed{} detectors at the median, and it spreads
widely across series. A fixed top-$K$ rule has to pick one point on this axis, and no single
$K$ fits them all.
\begin{equation}
\label{eq:neff}
N_{\text{eff}}(w) \;=\; \exp\!\big(H(w)\big),
\qquad
H(w) \;=\; -\sum_{i=1}^{N} w_i \log w_i,
\end{equation}
where $w$ is the weight vector \method{} forms on a series and $N$ is the size of the
pool. Note that $N_{\text{eff}}$ carries the unit of a count, i.e., it is one when a
single detector holds all the weight, $K$ when $K$ detectors share it evenly, and $N$
when the weights are flat.

}{}

\IfFileExists{figs/fig_neff.pdf}{%
\begin{figure}[h]
\centering
\includegraphics[width=0.74\textwidth]{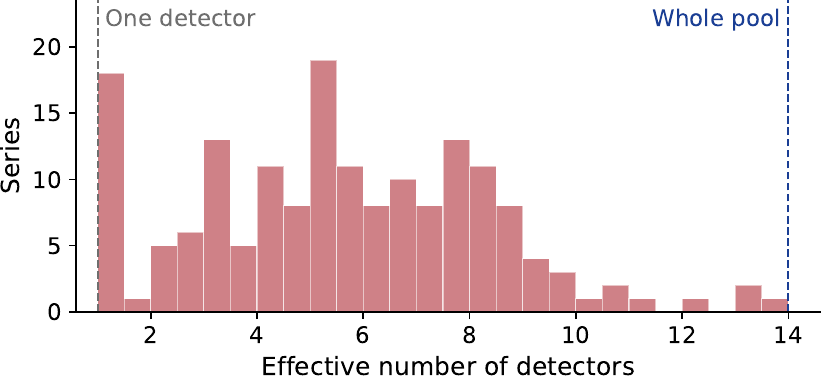}
\caption{\textbf{The number of detectors that carry weight varies from series to
series.} The histogram counts series by $N_{\text{eff}}$ of Equation~\ref{eq:neff}, and
the dashed lines mark a single detector and the whole pool.}
\label{fig:neff}
\end{figure}
}{}

\IfFileExists{figs/fig_tempneff.pdf}{%
\noindent\textbf{How the temperature spreads the weight.}
The temperature is meant to set how many detectors share the weight rather than which one wins.
To measure that, we hold the series fixed and recompute the weights at three temperatures,
reading $N_{\text{eff}}$ off each. As shown in Figure~\ref{fig:tempneff}, the median count
rises from under five to more than ten as $\tau$ grows, and reaches neither extreme. The
temperature therefore sets how widely the weight spreads, never reducing \method{} to either
end.

\begin{figure}[h]
\centering
\includegraphics[width=0.74\textwidth]{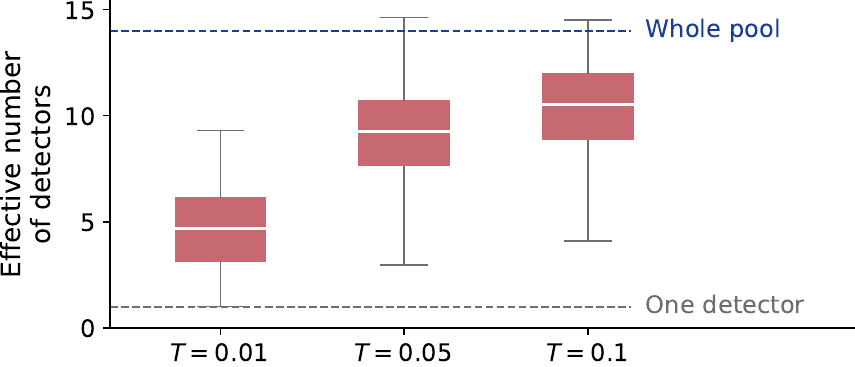}
\caption{\textbf{Effective detector count by temperature.}
$N_{\text{eff}}$ on the same series at three temperatures.}
\label{fig:tempneff}
\end{figure}
}{}

\IfFileExists{tables/neffk.tex}{%
\clearpage
\noindent\textbf{How the count moves as labels arrive.}
A second confirmation gives the weights more to read, and it could either sharpen them onto
one detector or spread them further over the pool. To ask which happens, we record
$N_{\text{eff}}$ at every budget $k$ on the series that carry enough events, holding the
temperature at the value of Appendix~\ref{app:tau}. As shown in Table~\ref{tab:neffk}, the
median count stays near its one-shot value at every budget we run, and it falls neither
toward a single detector nor toward the whole pool. \method{} therefore keeps mixing as the
budget grows rather than turning into a hard pick, which is what the accuracy of
Table~\ref{tab:grow} shows from the other side.

\begin{table}[h]
\centering
\caption{\textbf{The effective count as the budget grows.} Each row reads the series that
carry more events than it reveals, with $N_{\text{eff}}$ the perplexity of
Equation~\ref{eq:neff} at the temperature of Appendix~\ref{app:tau}.}
\label{tab:neffk}
\small
\setlength{\tabcolsep}{9pt}
\begin{tabular}{c|c|c}
\toprule
Confirmed events & Series & Median $N_{\text{eff}}$ \\
\midrule
1 & 153 & 6.1 \\
2 & 117 & 6.5 \\
3 & 72 & 6.3 \\
4 & 63 & 6.4 \\
5 & 51 & 6.9 \\
6 & 48 & 7.0 \\
7 & 47 & 5.9 \\
8 & 40 & 5.9 \\
\bottomrule
\end{tabular}

\end{table}
}{}

\IfFileExists{tables/bestk.tex}{%
\noindent\textbf{Which fixed $K$ a series wants.}
The count above is a property of the weights, while a practitioner who prefers a top-$K$
average has to name $K$ before any label arrives. To ask whether one value would serve them
all, we read the four fixed rules on every series and count which of them lands highest. As
shown in Table~\ref{tab:bestk}, the most common answer, $K=\bestkmode{}$, is the best rule on
\bestkpct{}\% of the series, and the other three take the rest in similar shares. A $K$ chosen
for each series would pay, while it has to read the hidden part to make that choice, i.e., it
is an upper bound rather than a rule a deployment can run.

\begin{table}[h]
\centering
\caption{\textbf{Which fixed $K$ is best, series by series.} The middle column counts the
series where that rule reaches the highest VUS-PR, and the last row chooses $K$ on the hidden
part, which no deployment can do and which therefore bounds the top-$K$ family rather than
joining it.}
\label{tab:bestk}
\small
\setlength{\tabcolsep}{9pt}
\begin{tabular}{l|c|c}
\toprule
Rule & Series it is best on & Mean VUS-PR \\
\midrule
Top-2 average & \hi{36.2\%} & 0.702 \\
Top-3 average & 21.7\% & 0.704 \\
Top-5 average & 23.7\% & 0.703 \\
Top-10 average & 18.4\% & 0.649 \\
\midrule
\rowcolor{yellow!18} \method{} & -- & 0.710 \\
\rowcolor{black!6} $K$ chosen for each series & -- & 0.747 \\
\bottomrule
\end{tabular}

\end{table}
}{}
\clearpage
\section{The Alarm Threshold}
\label{app:deploy}

\noindent\textbf{The same label sets the threshold.}
A deployed monitor needs a threshold as well as a detector, and practice sets it by alarming on
a fixed fraction of the points. To ask whether the confirmed event can also set the threshold,
we keep the combined score fixed and vary only how that threshold is chosen. As shown in
Table~\ref{tab:thresh}, a threshold read off the window beats both fixed fractions on F1 and on
event recall. Note that estimating the anomaly rate from the window catches the most events and
has the worst F1, which is why we read the threshold from the window directly rather than
through a rate.

\begin{table}[h]
\centering
\caption{\textbf{Choice of the alarm threshold.} Every row thresholds the same combined score.}
\label{tab:thresh}
\small
\begin{tabular}{ll|cc}
\toprule
Source & Threshold rule & F1 & Event recall \\
\midrule
\multirow{3}{*}{Practice} & Top 1\% of points & 0.243 & 0.598 \\
 & Top 5\% of points & 0.385 & 0.763 \\
 & Estimated contamination & 0.140 & \hi{0.961} \\
\midrule
\multirow{2}{*}{One confirmed event} & Highest recall at a precision floor & 0.450 & 0.794 \\
 & Highest F1 on the window & \hi{0.454} & 0.790 \\
\midrule
\rowcolor{black!6} \multirow{1}{*}{Oracle} & Highest F1 on the hidden part & 0.546 & 0.830 \\
\bottomrule
\end{tabular}
\end{table}

\vspace{10pt}
\section{Selectors That Need a Corpus}
\label{app:corpus}

\noindent\textbf{Rules that a one-shot setting cannot run.}
A second family of selectors is missing from Table~\ref{tab:main} because of the setting,
not because of the result. A meta-learner that predicts the winner from features of the series
\citep{sylligardos2023choose,boniol2024adecimo} and a framework that adapts a selection policy
over time \citep{abdelmaksoud2026ramses} both need \textit{a corpus of past evaluations}, i.e.,
many series already scored by the same pool. A deployment reading its first confirmed event
does not hold one. A semi-supervised detector trained on the few labeled intervals
\citep{lai2024openset} needs a training loop inside the detector instead. A recent benchmark
of multivariate selection reports the same gap at scale and finds that selectors reading no
label do not close it \citep{zhou2026mtsbench}, which is the gap a single confirmation
addresses here. Table~\ref{tab:families}
places every family by what it needs, and every rule we compare against reads the same
single confirmed event and nothing else.

\clearpage
\section{Results on Multivariate Series}
\label{app:mv}

\noindent\textbf{Aggregation across the channels.}
A detector on a multivariate series reads every channel and returns \textit{one score per time
point}, i.e., the aggregation happens inside the detector and the released outputs already
carry it. \method{} therefore weights a multivariate pool exactly as it weights a univariate
one, and nothing in the method depends on the number of channels. What changes is the pool, which
holds \mvndet{} detectors at the median against \medkept{} on univariate series, and every
method sits lower than its univariate counterpart.

\IfFileExists{tables/mvpool.tex}{%
\noindent\textbf{Detector diversity.}
A deployment on multivariate series runs fewer detectors, and Table~\ref{tab:mvpool} asks
whether the gain holds as that pool grows. \method{} leads at every pool size, and the margin
over the window winner grows with the pool.

\begin{table}[h]
\centering
\caption{\textbf{Pool size on multivariate series.} The same comparison as Table~\ref{tab:mv}, with the pool grown beyond the detectors the benchmark runs by default.}
\label{tab:mvpool}
\small
\begin{tabular}{l|c|c|ccc}
\toprule
Pool & Series & Detectors & Score average & Best on window & \method{} \\
\midrule
Statistical & 73 & 9 & 0.272 & 0.368 & \hi{0.382} \\
+ DL & 73 & 19 & 0.453 & 0.489 & \hi{0.503} \\
+ Window lengths & 73 & 28 & 0.482 & 0.511 & \hi{0.539} \\
\bottomrule
\end{tabular}
\end{table}

}{}

\IfFileExists{tables/mvnoise.tex}{%
\noindent\textbf{Robustness to imperfect labels.}
A confirmation on many channels is no cleaner than one on a single series, and we therefore
repeat every corruption of Table~\ref{tab:noise} here. As shown in Table~\ref{tab:mvnoise},
\method{} keeps its lead in every row, and gains most where the confirmed window holds no
anomaly at all.

\begin{table}[h]
\centering
\caption{\textbf{Imperfect labels on multivariate series.} The same corruptions as Table~\ref{tab:noisebig}, measured on the multivariate collection.}
\label{tab:mvnoise}
\small
\begin{tabular}{l|cc|c}
\toprule
Confirmed event & Best & \method{} & Imp (\%) \\
\midrule
Exact & 0.433 & \hi{0.459} & \hi{+5.9} \\
Half of it & 0.417 & \hi{0.449} & \hi{+7.5} \\
Quarter of it & 0.392 & \hi{0.437} & \hi{+11.6} \\
Shifted by half & 0.395 & \hi{0.420} & \hi{+6.3} \\
Shifted by its length & 0.317 & \hi{0.345} & \hi{+9.0} \\
Normal window & 0.296 & \hi{0.326} & \hi{+10.2} \\
\bottomrule
\end{tabular}
\end{table}

}{}

\IfFileExists{tables/mvthresh.tex}{%
\noindent\textbf{The alarm threshold.}
A multivariate monitor needs a threshold for the same reason a univariate one does. We
keep the combined score fixed and vary only how the threshold is chosen. As shown in
Table~\ref{tab:mvthresh}, reading it off the confirmed window beats both fixed fractions
on F1 and on the share of hidden events that raise an alarm.

\begin{table}[h]
\centering
\caption{\textbf{Alarm threshold on multivariate series.} The same comparison as Table~\ref{tab:thresh}, measured on the multivariate collection.}
\label{tab:mvthresh}
\small
\begin{tabular}{l|cc}
\toprule
Threshold & F1 & Events caught \\
\midrule
Top 1\% of points & 0.194 & 59.1\% \\
Top 5\% of points & 0.324 & 77.9\% \\
From the confirmed window & \hi{0.360} & \hi{87.3\%} \\
Best on the hidden part & 0.448 & 88.5\% \\
\bottomrule
\end{tabular}
\end{table}

}{}

\clearpage
\IfFileExists{tables/mvagree.tex}{%
\noindent\textbf{The same split on multivariate series.}
Section~\ref{sec:moreanalysis} traces the gain to the disagreement between the ranking
on the confirmed window and the ranking that matters. We split the \nmvdiag{}
multivariate series by the same agreement. As shown in Table~\ref{tab:mvagree}, the gain is larger
below the median than above it, while neither split separates at this sample size. Note
that the weights spread over \mvneff{} of \mvndet{} detectors here, which leaves a soft
weight less room to work with.

\begin{table}[h]
\centering
\caption{\textbf{Rank agreement and gain on multivariate series.} Series are split by how well the ranking on the confirmed window agrees with the ranking on the hidden part, as in Table~\ref{tab:agree}.}
\label{tab:mvagree}
\small
\begin{tabular}{l|c|cc}
\toprule
Agreement on the window & Series & Best on window & \method{} \\
\midrule
Below the median & 36 & 0.225 & \hi{0.261} \\
Above the median & 37 & 0.507 & \hi{0.510} \\
\bottomrule
\end{tabular}
\end{table}

}{}

\IfFileExists{tables/mvmetrics.tex}{%
\noindent\textbf{The same reading on multivariate series.}
Table~\ref{tab:mvmetrics} repeats the four headline measures on the multivariate
collection, where the pool is smaller and every method sits lower. \method{} leads
both precision-recall measures, and the top-3 average leads the two ROC measures by a
few thousandths.

\begin{table}[h]
\centering
\caption{\textbf{Multivariate series under four ranking measures.} The same reading as Table~\ref{tab:main} on the multivariate collection of TSB-AD{,} where the pool is smaller and the series are harder.}
\label{tab:mvmetrics}
\small
\begin{tabular}{l|cccc|c|c}
\toprule
Method & AUC-PR & AUC-ROC & VUS-PR & VUS-ROC & $\rho$ & $p$ \\
\midrule
\multicolumn{7}{l}{\textit{No labels}} \\
\quad Random detector & 0.211 & 0.680 & 0.211 & 0.712 & 0.0\% & $<$0.001 \\
\quad Score average & 0.262 & 0.783 & 0.272 & 0.812 & 25.0\% & $<$0.001 \\
\quad Rank average & 0.217 & 0.749 & 0.220 & 0.779 & 3.6\% & $<$0.001 \\
\midrule
\multicolumn{7}{l}{\textit{One confirmed event}} \\
\quad Best on labeled window & 0.356 & 0.804 & 0.368 & 0.832 & 64.7\% & 0.209 \\
\quad Top-2 average & \sd{0.359} & 0.817 & \sd{0.371} & 0.845 & \sd{66.0\%} & 0.011 \\
\quad Top-3 average & 0.349 & \sd{0.820} & 0.364 & \sd{0.847} & 63.2\% & 0.010 \\
\quad Top-5 average & 0.298 & 0.811 & 0.309 & 0.836 & 40.4\% & $<$0.001 \\
\quad Linear weighting & 0.318 & 0.818 & 0.330 & 0.844 & 49.3\% & $<$0.001 \\
\quad \method{} & \hi{0.369} & \hi{0.821} & \hi{0.382} & \hi{0.849} & \hi{70.4\%} & -- \\
\midrule
\rowcolor{black!6} Oracle (Best detector for each) & 0.448 & 0.871 & 0.453 & 0.890 & 100.0\% & $<$0.001 \\
\bottomrule
\end{tabular}
\end{table}

}{}

\IfFileExists{tables/mvfamily.tex}{%
\noindent\textbf{Which block of the pool the weights favor.}
The multivariate pool mixes statistical detectors with deep and foundation models, which
leaves open whether the weights prefer one kind over another. We group the detectors of the
largest pool by the blocks of Table~\ref{tab:poolsmv} and compare the share of the weight a
block receives with the share of the pool it occupies. As shown in Table~\ref{tab:mvfamily},
the statistical block receives less weight than its size on the \mvfamnseries{} series, while
the deep block and the window-length variants both receive more. Note that the weights read
accuracy on the confirmed window alone, i.e., nothing in the method knows which block a
detector belongs to.

\begin{table}[h]
\centering
\caption{\textbf{Where the weight goes on multivariate series.} The blocks are those of
Table~\ref{tab:poolsmv}, and the ratio divides the share of the weight by the share of the
pool, i.e., one means a block receives exactly its size.}
\label{tab:mvfamily}
\small
\setlength{\tabcolsep}{8pt}
\begin{tabular}{l|cc|c}
\toprule
Block & Share of the pool & Share of the weight & Ratio \\
\midrule
Statistical & 37.7\% & 24.1\% & 0.64 \\
Deep and foundation & 37.2\% & 42.7\% & 1.15 \\
Window-length variants & 25.1\% & 33.2\% & 1.32 \\
\bottomrule
\end{tabular}

\end{table}
}{}
\clearpage
\section{Protocol Details}
\label{app:protocol}

\IfFileExists{tables/cleanneg.tex}{%
\subsection{The Noise in the Drawn Normal Half}
\label{app:neg}

\noindent\textbf{Cost of the drawn normal half.}
The normal half is the set $O_-$ of Section~\ref{sec:setup}, i.e., as many time points as
the confirmed event holds, drawn uniformly and treated as normal. The draw reads the whole
series, and some of those points are therefore anomalous and reach the method with the wrong
label. To measure that cost, we read every series twice with the same confirmed event and
hidden part, changing only where $O_-$ is drawn from. As shown in Table~\ref{tab:cleanneg},
drawing it from the truly normal points instead moves the closed gap by \negdrho{} points,
which a signed-rank test does not separate from zero, although \negcontam{} of the drawn points
are anomalous at the median. We attribute the small cost to the hidden part itself, which also
holds a few anomalies among normal points and therefore resembles a noisy half more than a
clean one.

\begin{table}[h]
\centering
\caption{\textbf{What the noise in the drawn normal half costs.} Every series is read twice from the same confirmed event and the same hidden part{,} and only the source of $O_-$ changes: the whole series{,} as a deployment must{,} against the truly normal points{,} which needs the labels we avoid.}
\label{tab:cleanneg}
\small
\begin{tabular}{l|c|cc|c}
\toprule
Where $O_-$ is drawn from & Series & Best on window & \method{} & $\rho$ \\
\midrule
The whole series (ours) & 150 & 0.675 & \hi{0.712} & 84.3\% \\
The normal points (needs labels) & 150 & 0.653 & \hi{0.710} & 83.6\% \\
\midrule
Difference & & -0.022 & -0.003 & -0.7\% \\
$p$ vs.\ the first row & & 0.030 & 0.432 & \\
\bottomrule
\end{tabular}
\end{table}

\noindent\textbf{Draws of the normal half.}
Every number we report averages repeated draws rather than a single sample. A series is read
\textit{eight times}, each with a fresh confirmed event and normal half, and three times when it
exceeds \mbox{20\,000} points, which bounds the cost of the longest series. Each series is seeded
by a checksum of its name, i.e., a rerun returns the same draws and every method sees the same
observed part. Note that one draw moves a series by \sdsoft{} VUS-PR at the median under
\method{} against \sdargmax{} under the window winner, i.e., \textit{the soft weight is the
steadier of the two}.

\noindent\textbf{The size of the drawn half.}
The drawn half matches the confirmed event in size, which keeps the two classes balanced and
ties its cost to the event rather than to the series. A larger half steadies the window
reading, while a uniform draw also hands it more wrong labels. To ask which effect wins, we
repeat the protocol with halves of two and five times that size, holding the confirmed event
and the hidden part fixed. As shown in Table~\ref{tab:negsize}, the anomalous share stays flat,
while five times the points lifts both \method{} and the window winner, the latter more. Note
that every number in this paper uses the smallest half, where every method reads lowest and
the margin over the window winner is widest.

\IfFileExists{tables/negsize.tex}{%
\begin{table}[h]
\centering
\caption{\textbf{What a larger normal half buys.} Every series is read three times from the
same confirmed event and the same hidden part, and only the number of drawn normal points
changes. The last column is a signed-rank test against the first row over \negnseries{} series.}
\label{tab:negsize}
\small
\setlength{\tabcolsep}{7pt}
\begin{tabular}{l|c|cc|c|c}
\toprule
Size of $O_-$ & Anomalous in it & Best on window & \method{} & $\rho$ & $p$ vs.\ $1\times$ \\
\midrule
\rowcolor{yellow!18} $|O_+|$ (ours) & 6.38\% & 0.667 & \hi{0.713} & 83.5\% & -- \\
$2\times|O_+|$ & 6.41\% & 0.684 & \hi{0.716} & 84.4\% & 0.233 \\
$5\times|O_+|$ & 6.42\% & 0.707 & \hi{0.729} & 87.7\% & $<$0.001 \\
\bottomrule
\end{tabular}

\end{table}
}{}

}{}

\IfFileExists{tables/causal.tex}{%
\clearpage
\subsection{Temporal Order of the Observed Part}
\label{app:causal}

\noindent\textbf{Normal points drawn from the past alone.}
A deployment cannot read points that lie after the confirmed event, while a uniform draw can
place a few of them in the normal half. We therefore repeat the protocol with the draw
restricted to points \textit{before the confirmed event ends}, holding the hidden part fixed.
As shown in Table~\ref{tab:causal}, the restriction moves \method{} by \caussoft{}
($p = \causpsoft{}$) and costs the window winner \causargmax{} ($p = \causpargmax{}$), i.e., the
margin widens once the draw respects the clock. Nothing in the result therefore rests on reading
a future point.

\begin{table}[h]
\centering
\caption{\textbf{Where the normal half is drawn from.} The right column draws it only from
the points before the confirmed event ends, i.e., from what a deployment has already seen.}
\label{tab:causal}
\small
\begin{tabular}{l|cc|c}
\toprule
Method & Drawn from the whole series & Drawn from the past & $\Delta$ \\
\midrule
Best on labeled window & 0.672 & 0.656 & -0.016 \\
\rowcolor{yellow!18} \method{} & 0.710 & 0.704 & -0.006 \\
\midrule
\rowcolor{black!6} Oracle (Best detector for each) & 0.779 & 0.779 & +0.000 \\
\bottomrule
\end{tabular}

\end{table}
}{}

\IfFileExists{tables/nshot.tex}{%
\vspace{10pt}
\subsection{The Budget Sweep on a Changing Hidden Part}
\label{app:nshot}

\noindent\textbf{Difference between the two sweeps.}
We read the same values of $k$ twice, once with the hidden part shrinking as events are
revealed and once with it fixed. As shown in Table~\ref{tab:nshot}, the first reading
rises to a peak and falls back, and the oracle it is measured against falls with it. The fall is therefore a property of \textit{the scoring set} rather than of
\textit{the method}, which Table~\ref{tab:growfull} confirms on a fixed one.

\begin{table}[h]
\centering
\caption{\textbf{Confirming more than one event.} Each row reveals $k$ events and hides the rest, and $\rho$ is the fraction of the gap to the best detector that \kshot{} closes. The upper block uses every series that carries $k$ events, and the lower block repeats the sweep on the series that carry all of them.}
\label{tab:nshot}
\small
\begin{tabular}{l|c|ccc|cc}
\toprule
Confirmed events & Series & Score average & Best on window & \kshot{} & $\rho$ & $p$ \\
\midrule
\multicolumn{7}{l}{\textit{Every series that carries $k$}} \\
\quad 1 & 154 & 0.599 & 0.674 & \hi{0.711} & 82.5\% & $<$0.001 \\
\quad 2 & 122 & 0.584 & 0.667 & \hi{0.700} & 85.9\% & $<$0.001 \\
\quad 3 & 75 & 0.476 & 0.568 & \hi{0.620} & 86.2\% & $<$0.001 \\
\quad 4 & 67 & 0.499 & 0.582 & \hi{0.622} & 82.6\% & $<$0.001 \\
\quad 5 & 54 & 0.520 & 0.590 & \hi{0.647} & 88.9\% & $<$0.001 \\
\quad 6 & 53 & 0.518 & 0.574 & \hi{0.634} & 86.5\% & $<$0.001 \\
\quad 7 & 56 & 0.536 & 0.607 & \hi{0.638} & 80.1\% & 0.013 \\
\quad 8 & 52 & 0.588 & 0.660 & \hi{0.692} & 84.1\% & 0.109 \\
\midrule
\multicolumn{7}{l}{\textit{The same series throughout}} \\
\quad 1 & 37 & 0.602 & 0.702 & \hi{0.746} & 88.5\% & $<$0.001 \\
\quad 2 & 37 & 0.586 & 0.693 & \hi{0.755} & 93.9\% & $<$0.001 \\
\quad 3 & 37 & 0.579 & 0.684 & \hi{0.751} & 95.3\% & $<$0.001 \\
\quad 4 & 37 & 0.567 & 0.680 & \hi{0.741} & 95.4\% & $<$0.001 \\
\quad 5 & 37 & 0.547 & 0.657 & \hi{0.716} & 94.5\% & $<$0.001 \\
\quad 6 & 37 & 0.533 & 0.633 & \hi{0.695} & 93.2\% & $<$0.001 \\
\quad 7 & 37 & 0.504 & 0.628 & \hi{0.677} & 92.3\% & 0.002 \\
\quad 8 & 37 & 0.488 & 0.602 & \hi{0.657} & 92.6\% & 0.004 \\
\bottomrule
\end{tabular}
\end{table}

\noindent\textbf{Every $k$ on the fixed scoring set.}
Table~\ref{tab:growfull} is the full version of Table~\ref{tab:grow}. Each block
holds the series that carry that many events, and the last column tests every $k$ against
a single confirmed event read on exactly the same set of series.

\IfFileExists{tables/grow.tex}{\begin{table}[h]
\centering
\caption{\textbf{Does a second label help? (full version)} Every row reads the same hidden part, which is fixed before any event is revealed, which means every $k$ is measured on one scoring set. $\rho$ is the fraction of the gap to the best detector that \kshot{} closes, and the last column tests each $k$ against a single confirmed event on the same series. Table~\ref{tab:grow} reads the same numbers in the main text.}
\label{tab:growfull}
\small
\begin{tabular}{l|c|cc|cc}
\toprule
Confirmed events & Series & Best on window & \kshot{} & $\rho$ & $p$ vs.\ one event \\
\midrule
\multicolumn{6}{l}{\textit{Series that carry 2 events}} \\
\quad 1 & 105 & 0.715 & \hi{0.781} & 86.4\% & -- \\
\quad 2 & 105 & 0.727 & \hi{0.789} & 88.4\% & 0.615 \\
\midrule
\multicolumn{6}{l}{\textit{Series that carry 3 events}} \\
\quad 1 & 57 & 0.620 & \hi{0.709} & 80.4\% & -- \\
\quad 2 & 57 & 0.633 & \hi{0.735} & 86.9\% & $<$0.001 \\
\quad 3 & 57 & 0.677 & \hi{0.744} & 89.3\% & $<$0.001 \\
\midrule
\multicolumn{6}{l}{\textit{Series that carry 4 events}} \\
\quad 1 & 48 & 0.601 & \hi{0.691} & 75.8\% & -- \\
\quad 2 & 48 & 0.616 & \hi{0.696} & 77.4\% & 0.037 \\
\quad 3 & 48 & 0.649 & \hi{0.696} & 77.2\% & 0.512 \\
\quad 4 & 48 & 0.652 & \hi{0.717} & 83.0\% & 0.061 \\
\midrule
\multicolumn{6}{l}{\textit{Series that carry 5 events}} \\
\quad 1 & 37 & 0.613 & \hi{0.725} & 81.2\% & -- \\
\quad 2 & 37 & 0.626 & \hi{0.755} & 88.0\% & 0.003 \\
\quad 3 & 37 & 0.681 & \hi{0.768} & 90.3\% & 0.005 \\
\quad 4 & 37 & 0.691 & \hi{0.771} & 91.7\% & 0.009 \\
\quad 5 & 37 & 0.707 & \hi{0.785} & 95.7\% & 0.004 \\
\midrule
\multicolumn{6}{l}{\textit{Series that carry 6 events}} \\
\quad 1 & 38 & 0.630 & \hi{0.713} & 79.6\% & -- \\
\quad 2 & 38 & 0.637 & \hi{0.751} & 89.0\% & 0.003 \\
\quad 3 & 38 & 0.668 & \hi{0.759} & 89.9\% & $<$0.001 \\
\quad 4 & 38 & 0.679 & \hi{0.760} & 89.9\% & 0.002 \\
\quad 5 & 38 & 0.694 & \hi{0.770} & 93.7\% & $<$0.001 \\
\quad 6 & 38 & 0.712 & \hi{0.778} & 95.8\% & $<$0.001 \\
\midrule
\multicolumn{6}{l}{\textit{Series that carry 7 events}} \\
\quad 1 & 37 & 0.623 & \hi{0.711} & 74.7\% & -- \\
\quad 2 & 37 & 0.640 & \hi{0.742} & 82.3\% & 0.006 \\
\quad 3 & 37 & 0.677 & \hi{0.744} & 82.1\% & 0.011 \\
\quad 4 & 37 & 0.694 & \hi{0.753} & 83.8\% & 0.006 \\
\quad 5 & 37 & 0.717 & \hi{0.762} & 87.3\% & 0.002 \\
\quad 6 & 37 & 0.726 & \hi{0.762} & 88.0\% & 0.002 \\
\quad 7 & 37 & 0.723 & \hi{0.764} & 88.9\% & 0.001 \\
\midrule
\multicolumn{6}{l}{\textit{Series that carry 8 events}} \\
\quad 1 & 29 & 0.664 & \hi{0.693} & 68.4\% & -- \\
\quad 2 & 29 & 0.659 & \hi{0.767} & 84.4\% & $<$0.001 \\
\quad 3 & 29 & 0.689 & \hi{0.764} & 83.1\% & $<$0.001 \\
\quad 4 & 29 & 0.697 & \hi{0.790} & 89.8\% & $<$0.001 \\
\quad 5 & 29 & 0.720 & \hi{0.784} & 89.8\% & $<$0.001 \\
\quad 6 & 29 & 0.726 & \hi{0.797} & 93.0\% & $<$0.001 \\
\quad 7 & 29 & 0.732 & \hi{0.796} & 93.1\% & $<$0.001 \\
\quad 8 & 29 & 0.732 & \hi{0.795} & 93.1\% & $<$0.001 \\
\bottomrule
\end{tabular}
\end{table}
}{}
}{}

\clearpage
\section{Score Preprocessing}
\label{app:prep}

\noindent\textbf{Orientation of the scores.}
Every detector reports larger scores for more anomalous points, and we standardize each score
vector to zero mean and unit variance before mixing. The rescaling leaves the ranking of a
single detector unchanged while putting the pool on one scale, which makes a weighted sum
meaningful. A detector with the opposite convention would score below a random one on the
window and receive the smallest weight. Note that every method in Table~\ref{tab:main} reads
these same standardized vectors, i.e., the averages and the linear weighting mix exactly what
\method{} mixes, while the rank average and the window pick do not depend on the scale.

\noindent\textbf{The near-constant filter.}
We drop a score vector whose standard deviation falls below $10^{-9}$, that takes fewer than
$\max(10, 0.001T)$ distinct values on a series of length $T$, or more than half of whose entries
are not finite, and we fill the remaining non-finite entries with the median. Such a vector
carries no ranking of its own, while the measure would still read it as a tie in its favor. The
filter drops \ndrop{} of the \ntotfile{} released vectors (\droppct{}\%), leaving a median of
\medkept{} of \medraw{} detectors per series. Note that it costs the pool nothing, i.e., the
oracle is unchanged and a dropped vector is the best of its series on only \ndropbeat{} of them.

\vspace{4pt}
\noindent\textbf{The measure the weights read.}
The weights read VUS-PR on the confirmed window and the report reads VUS-PR on the hidden
part, which leaves open whether the lead belongs to the method or to the measure. The two
parts are disjoint, i.e., the weights never see a point they are later scored on. As shown in
Table~\ref{tab:main}, the same runs under three further measures leave \method{} ahead on
three of the four and just behind the best row on the fourth.
Appendix~\ref{app:metrics} repeats the comparison one series at a time and reaches the same
reading.

\vspace{10pt}
\section{Sensitivity to the Anomaly Rate}
\label{app:rate}

\noindent\textbf{Gain against the density of anomalies.}
A denser series hands the uniformly drawn normal half more points that carry the wrong label.
To ask how far that reaches, we sort the series into quarters by their anomaly rate and measure
the gain over the window winner in each. As shown in Table~\ref{tab:rate}, the gain is largest
on the sparsest quarter and shrinks with density, turning slightly negative on the densest one.
Note that a series where more than one point in eight is anomalous lies \textit{outside the
regime a monitor runs in}, and the three quarters that do match a deployment all favor
\method{}.

\begin{table}[h]
\centering
\caption{\textbf{Gain against the anomaly rate.} Series are sorted into quarters by the share
of points that are anomalous, where the gain shrinks as the series grow denser.}
\label{tab:rate}
\small
\begin{tabular}{l|c|ccc}
\toprule
Quarter & Anomaly rate & \method{} & Best on window & Gain \\
\midrule
1Q (Sparse) & 0.53\% & 0.536 & 0.419 & +0.116 \\
2Q & 3.69\% & 0.779 & 0.740 & +0.040 \\
3Q & 7.80\% & 0.692 & 0.676 & +0.016 \\
4Q (Dense) & 12.62\% & 0.835 & 0.853 & -0.018 \\
\bottomrule
\end{tabular}

\end{table}

\clearpage
\section{The \kshot{} Pipeline}
\label{app:kshot}

\noindent\textbf{The same pipeline at a larger budget.}
\kshot{} reads every event confirmed so far in place of the single one, and nothing else about
the pipeline changes. As shown in Figure~\ref{fig:kshot}, the confirmed part of the top row
holds one event and that of the bottom row holds three, while the scoring of the pool, the
softmax and the weighted mix stay as they are. The weights differ between the rows only because
the accuracies they read differ, i.e., a longer confirmed part ranks the pool more steadily.
Note that both rows use the temperature of Appendix~\ref{app:tau}, and no quantity is retuned
as $k$ grows.

\begin{figure}[h]
\centering
\includegraphics[width=0.98\textwidth]{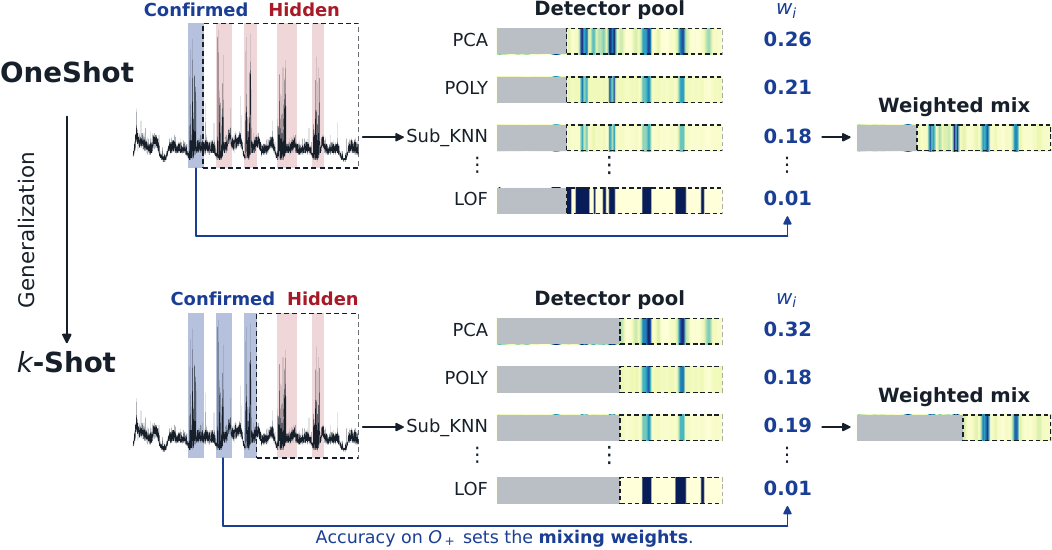}
\caption{\textbf{\method{} and \kshot{} on one series.} The top row confirms one event and the
bottom row three, while the events in red stay hidden. Both rows run the same pipeline, and
only the confirmed part grows.}
\label{fig:kshot}
\end{figure}

\clearpage
\section{Two Properties of the Softmax Weight}
\label{app:theory}

\noindent\textbf{Scope of this section.}
The two statements below are properties of the softmax rather than new theory, and we state
them because they account for two of our results. Both concern the weight $w$ that \method{}
forms from the window accuracies $a = (a_1,\dots,a_N)$ in Equation~\ref{eq:weights}. They say
nothing about the accuracy of the mixed score $z$ itself, since a ranking measure is not linear
in $w$. Note that neither needs an assumption on the detectors, the series or the measure.

\begin{proposition}[Entropy-regularized choice]
\label{prop:entreg}
Let $\Delta_N$ be the probability simplex over $N$ detectors and $H$ the entropy of
Equation~\ref{eq:neff}. For any $a \in \mathbb{R}^N$ and $\tau > 0$, the weight of
Equation~\ref{eq:weights} is the unique solution of
\begin{equation}
\label{eq:entreg}
w \;=\; \arg\max_{v \in \Delta_N} \; \sum_{i} v_i a_i + \tau H(v).
\end{equation}
As $\tau \to 0$, $w$ tends to the indicator of $\arg\max_i a_i$ when that maximizer is unique,
and as $\tau \to \infty$, $w$ tends to the uniform weight $1/N$.
\end{proposition}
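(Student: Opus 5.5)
The plan is to treat the objective $F(v)=\sum_i v_i a_i + \tau H(v)$ on $\Delta_N$ directly, by a Gibbs-variational (KL) identity rather than Lagrange multipliers. This avoids any boundary case analysis. Write $w$ for the softmax of Equation~\ref{eq:weights}. Subtracting $\max_j a_j$ cancels between numerator and denominator, so $w_i = e^{a_i/\tau}/Z$ with $Z=\sum_l e^{a_l/\tau}$, and every $w_i>0$. Then $a_i = \tau\log w_i + \tau\log Z$. Substituting this into the linear term, for any $v\in\Delta_N$ (with the convention $0\log 0=0$), gives
\begin{equation}
F(v) \;=\; \tau\log Z \;+\; \tau\sum_i v_i\log w_i \;-\; \tau\sum_i v_i\log v_i \;=\; \tau\log Z \;-\; \tau\,\mathrm{KL}(v\,\|\,w).
\end{equation}
Here the $\tau\log Z$ term uses $\sum_i v_i=1$. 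This step is valid even when some $v_i=0$, since those coordinates contribute zero to both sums.

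By Gibbs' inequality, $\mathrm{KL}(v\|w)\ge 0$, with equality iff $v=w$. Gibbs' inequality follows from Jensen applied to the strictly concave $\log$ on the support of $v$, using $w_i>0$ for all $i$. Since $\tau>0$, $F(v)\le\tau\log Z$, with equality exactly at $v=w$. This gives both optimality and uniqueness at once, and shows that the maximum value is $\tau\log Z$.

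For the limits, I will use the shifted form. As $\tau\to\infty$, $(a_i-\max_j a_j)/\tau\to 0$ for each fixed $i$. Every exponential therefore tends to $1$, and $w_i\to 1/N$ by continuity of the finite quotient. As $\tau\to0$ with a unique maximizer $i^\ast$, the $i^\ast$ term equals $1$ for every $\tau$. Each other term is $\exp(-(a_{i^\ast}-a_l)/\tau)$ with $a_{i^\ast}-a_l>0$, so it tends to $0$. Hence the denominator tends to $1$, $w_{i^\ast}\to1$, and $w_l\to0$ for $l\neq i^\ast$.

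The only step needing care is uniqueness when candidate maximizers lie on the boundary of the simplex, where the entropy is not differentiable. A Lagrangian/KKT argument would have to rule these out separately, for example via the infinite slope of $-v\log v$ at $0$. The KL identity sidesteps this, because it holds on all of $\Delta_N$ and strict positivity of $w$ makes the equality case of Gibbs' inequality force $v=w$ exactly. I therefore expect no real obstacle beyond stating the $0\log0$ convention and the strictness condition in Jensen.
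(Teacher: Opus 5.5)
Your proof is correct, but it reaches the variational characterization by a different route from the paper's.

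The paper uses a convex-analysis argument in three steps. Strict concavity of $H$ gives uniqueness. The derivative $a_i - \tau(\log v_i + 1)$ is unbounded as $v_i \to 0$, which places the maximizer in the interior of $\Delta_N$. The Lagrange stationarity condition then yields $v_i \propto \exp(a_i/\tau)$.

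You instead verify the known candidate through the identity $F(v) = \tau\log Z - \tau\,\mathrm{KL}(v\,\|\,w)$. This identity holds on all of $\Delta_N$, boundary included. So existence, uniqueness and identification of the maximizer all follow from Gibbs' inequality in one step, with no compactness, interior or stationarity argument. Your route also yields more than the statement asks: the optimal value $\tau\log Z$ (the log-sum-exp) and the exact suboptimality gap $\tau\,\mathrm{KL}(v\,\|\,w)$ of any other weight, which is a quantitative form of uniqueness.

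The paper's route, in turn, \emph{derives} the softmax form rather than checking a guessed one. The same stationarity argument also applies to other smooth, strictly concave regularizers with infinite slope at the boundary, where there may be no closed-form candidate to verify.

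One correction to your closing remark: in the Lagrangian route, uniqueness does not depend on the boundary. $H$ is strictly concave on the whole simplex, so the maximizer is unique wherever it lies. The infinite-slope argument is needed only to place the maximizer in the interior, so that the stationarity condition identifies it.

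Your treatment of the two limits via the shifted exponentials matches the paper's.
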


\noindent\textit{Proof.} The objective is strictly concave on $\Delta_N$, since $H$ is strictly
concave and the first term is linear, which makes the maximizer unique. Its derivative in $v_i$
is $a_i - \tau(\log v_i + 1)$, which grows without bound as $v_i \to 0$, and the maximizer
therefore lies in the interior of $\Delta_N$. Setting the gradient of the Lagrangian
$\sum_i v_i a_i + \tau H(v) - \lambda(\sum_i v_i - 1)$ to zero gives
$\log v_i = (a_i - \lambda)/\tau - 1$, i.e., $v_i \propto \exp(a_i/\tau)$, which is
Equation~\ref{eq:weights}. The two limits follow from $\exp((a_i - \max_j a_j)/\tau)$, which
tends to $\mathbf{1}[a_i = \max_j a_j]$ as $\tau \to 0$ and to one for every $i$ as
$\tau \to \infty$. \hfill$\square$

\noindent\textbf{Reading of Proposition~\ref{prop:entreg}.}
The best detector on the window and the score average of Table~\ref{tab:main} are the two ends
of Equation~\ref{eq:entreg}, and \method{} sits between them at the temperature of
Appendix~\ref{app:tau}. The second term equals $\tau \log N_{\text{eff}}(w)$, i.e., the
temperature prices how many detectors share the weight, which is the count
Appendix~\ref{app:weights} measures. \method{} is therefore not a third heuristic next to the
two standard responses of Section~\ref{sec:intro}, but the choice that trades the window
accuracy against that count at a single rate. Note that the rate is fixed once for every series,
and nothing in Equation~\ref{eq:entreg} is fit to the series it weights.

\begin{proposition}[Stability of the weight]
\label{prop:stable}
For any $a, a' \in \mathbb{R}^N$ and $\tau > 0$, the weight of Equation~\ref{eq:weights}
satisfies
\begin{equation}
\label{eq:lipschitz}
\big\|w(a) - w(a')\big\|_2 \;\le\; \frac{1}{2\tau}\,\big\|a - a'\big\|_2 .
\end{equation}
In contrast, the hard pick $e_{\arg\max_i a_i}$ moves by $\sqrt{2}$ in $\ell_2$ whenever a change
of any size reverses the order of the two largest accuracies.
\end{proposition}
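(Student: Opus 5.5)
We prove the bound through the Jacobian of the softmax and the mean value theorem. Write $\sigma(u)_i = \exp(u_i)/\sum_l \exp(u_l)$, so that $w(a) = \sigma(a/\tau)$; subtracting $\max_j a_j$ in Equation~\ref{eq:weights} changes nothing, since $\sigma$ is invariant to adding a constant to every entry. The map is smooth on all of $\mathbb{R}^N$. Applying the mean value inequality along the segment $a_t = a' + t(a - a')$, $t\in[0,1]$, gives
\[
\|w(a) - w(a')\|_2 \;\le\; \sup_{t}\,\big\|J(a_t)\big\|_{2\to 2}\,\|a - a'\|_2 ,
\]
where $J(a) = \frac{1}{\tau}\big(\mathrm{diag}(w) - w w^\top\big)$ with $w = w(a)$. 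The proof therefore reduces to showing that the spectral norm of $\mathrm{diag}(w) - ww^\top$ is at most $1/2$ for every $w \in \Delta_N$.

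For the spectral bound, note first that $M = \mathrm{diag}(w) - ww^\top$ is symmetric positive semidefinite: for any $x$, $x^\top M x = \sum_i w_i x_i^2 - (\sum_i w_i x_i)^2 = \mathrm{Var}_w(x)$, the variance of $x$ under the distribution $w$. Hence $\|M\|_{2\to2} = \max_{\|x\|_2 = 1} \mathrm{Var}_w(x)$. This is the step we expect to be the main obstacle, because the naive bound via the largest diagonal entry, or via $\max_i w_i \le 1$, only gives $1$.

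To get $1/2$, we use the fact that the variance of a random variable with values in $[m, M']$ is at most $(M' - m)^2/4$ (Popoviciu). A unit vector $x$ has $\max_i x_i - \min_i x_i \le \sqrt{2}$, since $(x_i - x_j)^2 \le 2(x_i^2 + x_j^2) \le 2$ for $i \ne j$. So $\mathrm{Var}_w(x) \le 2/4 = 1/2$. This is tight, attained at $w = (\tfrac12,\tfrac12,0,\dots)$ and $x = (e_1 - e_2)/\sqrt2$. Combining with the mean value step yields Equation~\ref{eq:lipschitz}.

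For the contrast statement, suppose $a$ and $a'$ have distinct top indices $i \ne j$. The hard picks are then $e_i$ and $e_j$, and $\|e_i - e_j\|_2 = \sqrt2$ no matter how small $\|a - a'\|_2$ is. Such a pair exists for any $\varepsilon > 0$: take $a_i = a_j + \varepsilon/2$ and swap the two entries. No Lipschitz constant can therefore hold for the argmax.
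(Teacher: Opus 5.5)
Your proof is correct. Its skeleton matches the paper's: the Jacobian $\frac{1}{\tau}(\mathrm{diag}(w) - ww^\top)$, positive semidefiniteness from the covariance/variance identity, integration along the segment from $a'$ to $a$, and the $\sqrt{2}$ distance between distinct one-hot vectors.

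Where you differ is the key step that bounds the spectral norm by $1/2$. The paper uses Gershgorin's circle theorem. Row $i$ has diagonal entry $w_i(1-w_i)$, and its off-diagonal absolute row sum is $w_i\sum_{j\ne i}w_j = w_i(1-w_i)$. So every eigenvalue lies in $[0,\,2\max_i w_i(1-w_i)] \subseteq [0,1/2]$. You instead bound the Rayleigh quotient directly, applying Popoviciu's inequality to $\mathrm{Var}_w(x)$ with range at most $\sqrt{2}$ for unit $x$.

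Your route keeps the whole argument inside the variance interpretation. Your example $w=(\tfrac12,\tfrac12,0,\dots)$, $x=(e_1-e_2)/\sqrt{2}$ also shows that the constant $1/(2\tau)$ cannot be improved, which the paper does not address. Over softmax outputs this is a supremum rather than a maximum when $N>2$, since those weights are interior.

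The Gershgorin route gives in exchange a pointwise local constant $\frac{2}{\tau}\max_i w_i(1-w_i)$. This tends to zero as the weight approaches one-hot, because every $w_i(1-w_i)\to 0$ there. That fits the paper's remark that the room the bound leaves depends on how close the leading accuracies are. Your argument as written yields only the uniform constant.
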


\noindent\textit{Proof.} The Jacobian of $w$ with respect to $a$ is
$J = \frac{1}{\tau}\big(\mathrm{diag}(w) - ww^{\top}\big)$. The matrix in brackets is the
covariance of a one-hot variable with probabilities $w$, which makes it symmetric and positive
semidefinite. Its $i$-th diagonal entry is $w_i(1 - w_i)$ and the absolute values of the rest of
its $i$-th row also sum to $w_i(1 - w_i)$, and the Gershgorin circle theorem therefore bounds
every eigenvalue by $\max_i 2w_i(1 - w_i) \le 1/2$. Hence $\|J\|_2 \le 1/(2\tau)$ at every point,
and integrating $J$ along the segment from $a'$ to $a$ gives Equation~\ref{eq:lipschitz}. Two
distinct one-hot vectors lie $\sqrt{2}$ apart, which gives the second claim. \hfill$\square$

\noindent\textbf{Reading of Proposition~\ref{prop:stable}.}
A corrupted or short label shifts the window accuracies, and Proposition~\ref{prop:stable} bounds
how far the weight can follow. At the temperature of Appendix~\ref{app:tau}, i.e.,
$\tau = \taupick{}$, a change of $0.004$ in one accuracy moves the weight by at most $0.1$ in
$\ell_2$, while the same change hands the whole weight of a hard pick to another detector when
the two best accuracies lie within it. This is consistent with the two results where our gain is
largest, i.e., the corrupted labels of Table~\ref{tab:noisebig}, where the margin widens as the
label degrades, and the series of Table~\ref{tab:agree}, whose ranking on the window disagrees
with the ranking that matters. Note that the bound holds on every series, while the room it
leaves depends on how close the leading accuracies are, which is consistent with the flat gain
Table~\ref{tab:agree} shows where the window already agrees.

\clearpage
\section{Scope of the Claims and Where Each Concern Is Answered}
\label{app:scope}

\noindent\textbf{What we do not claim.}
\method{} weights a fixed pool, and the gain it reaches is bounded by that pool, since weighting
cannot add a detector. Our comparison runs among rules that combine a fixed pool \textit{without
training}, i.e., a combiner fit on the confirmed window is outside the setting we study. The
confirmations come from benchmark labels rather than from an operator, and the corrupted labels
of Table~\ref{tab:noisebig} stand in for the ways a real one goes wrong. On series where more
than one point in eight is anomalous we report no gain, as Table~\ref{tab:rate} shows. Note that
the propositions of Appendix~\ref{app:theory} describe the weight vector alone, and they say
nothing about the accuracy of the mixed score.

\noindent\textbf{Where each concern is answered.}
A reader who doubts one of the choices below can go straight to the evidence for it, and
Table~\ref{tab:scope} lists both. Every row points to a result already in this paper rather than
to an argument, i.e., the table adds no claim of its own. We group the rows by the part of the
protocol they question, from the method itself to the labels, the pool and the cost. Note that
the multivariate collection repeats every experiment of the main text, and its own concerns
therefore share the rows below.

\begin{table}[h]
\centering
\caption{\textbf{Where each concern is answered.} Each row names a question a reader may raise
and the result that answers it, without adding any claim of its own.}
\label{tab:scope}
\small
\setlength{\tabcolsep}{6pt}
\begin{tabular}{l|l}
\toprule
Concern & Where it is answered \\
\midrule
\multicolumn{2}{l}{\textit{The method}} \\
\quad A softmax over accuracies is a standard idea & Appendix~\ref{app:theory}, Propositions 1 and 2 \\
\quad The margin over the top-$K$ averages is small & Table~\ref{tab:main}, Table~\ref{tab:bestk} \\
\quad The temperature needs labeled series & Appendix~\ref{app:tau} \\
\midrule
\multicolumn{2}{l}{\textit{The label}} \\
\quad A real operator confirms imperfectly & Table~\ref{tab:noisebig}, Table~\ref{tab:mvnoise} \\
\quad The drawn normal half carries anomalies & Appendix~\ref{app:neg} \\
\quad The protocol may read points from the future & Appendix~\ref{app:causal} \\
\quad Events may have to arrive in a fixed order & Table~\ref{tab:first} \\
\midrule
\multicolumn{2}{l}{\textit{The pool and the measure}} \\
\quad The result may depend on the pool & Table~\ref{tab:pool}, Figure~\ref{fig:pool} \\
\quad Preprocessing may favor the method & Appendix~\ref{app:prep} \\
\quad One measure may decide the ranking & Table~\ref{tab:main}, Appendix~\ref{app:metrics} \\
\midrule
\multicolumn{2}{l}{\textit{The deployment}} \\
\quad The gain may not survive dense anomalies & Table~\ref{tab:rate} \\
\quad The cost may be prohibitive & Table~\ref{tab:cost} \\
\bottomrule
\end{tabular}
\end{table}

\end{document}